  \providecommand\BibTeX{{%
    \normalfont B\kern-0.5em{\scshape i\kern-0.25em b}\kern-0.8em\TeX}}}
\newtheorem{definition}{Definition}
\DeclareMathOperator*{\argmax}{arg\,max}
\DeclareMathOperator*{\argmin}{arg\,min}
\newenvironment{packeditemize}{
\begin{list}{$\bullet$}{
\setlength{\labelwidth}{8pt}
\setlength{\itemsep}{0pt}
\setlength{\leftmargin}{\labelwidth}
\addtolength{\leftmargin}{\labelsep}
\setlength{\parindent}{0pt}
\setlength{\listparindent}{\parindent}
\setlength{\parsep}{0pt}
\setlength{\topsep}{3pt}}}{\end{list}}
\begin{document}

\title{Stealing Deep Reinforcement Learning Models \\ for Fun and Profit}


\author{Kangjie Chen}
\affiliation{%
  \institution{Nanyang Technological University}
  \country{Singapore}}
\email{kangjie.chen@ntu.edu.sg}

\author{Shangwei Guo}
\affiliation{%
  \institution{Nanyang Technological University}
  \country{Singapore}}
\email{shangwei.guo@ntu.edu.sg}

\author{Tianwei Zhang}
\affiliation{%
  \institution{Nanyang Technological University}
  \country{Singapore}}
\email{tianwei.zhang@ntu.edu.sg}

\author{Xiaofei Xie}
\affiliation{%
  \institution{Nanyang Technological University}
  \country{Singapore}}
\email{xfxie@ntu.edu.sg}

\author{Yang Liu}
\affiliation{%
  \institution{Nanyang Technological University}
  \country{Singapore}}
\email{yangliu@ntu.edu.sg}

\begin{abstract}
This paper presents the \emph{first} model extraction attack against Deep Reinforcement Learning (DRL), which enables an external adversary to precisely recover a black-box DRL model only from its interaction with the environment. Model extraction attacks against supervised Deep Learning models have been widely studied. However, those techniques cannot be applied to the reinforcement learning scenario due to DRL models' high complexity, stochasticity and limited observable information. We propose a novel methodology to overcome the above challenges. The key insight of our approach is that the process of DRL model extraction is equivalent to \emph{imitation learning}, a well-established solution to learn sequential decision-making policies. Based on this observation, our methodology first builds a classifier to reveal the training algorithm family of the targeted black-box DRL model only based on its predicted actions, and then leverages state-of-the-art imitation learning techniques to replicate the model from the identified algorithm family. Experimental results indicate that our methodology can effectively recover the DRL models with high fidelity and accuracy. We also demonstrate two use cases to show that our model extraction attack can (1) significantly improve the success rate of adversarial attacks, and (2) steal DRL models stealthily even they are protected by DNN watermarks. These pose a severe threat to the intellectual property and privacy protection of DRL applications. 
\end{abstract}

\pagenumbering{arabic}

\maketitle

\section{Introduction}
Deep Reinforcement Learning (DRL) has gained popularity due to its strong capability of handling complex tasks and environments. It integrates Deep Learning (DL) architectures and reinforcement learning algorithms to build sophisticated policies, which can accurately understand the environmental context (\emph{states}) and make the optimal decisions (\emph{actions}). Various algorithms and methodologies have been designed to facilitate the applications of DRL in different artificial intelligent tasks, e.g., autonomous driving \cite{lillicrap2015continuous}, robot motion planning \cite{zhu2017target}, video game playing \cite{mnih2015human}, etc.

As DRL has been widely commercialized (e.g., autonomous driving Wayve \cite{Wayve}, path planning MobilEye \cite{MobilEye}), it is important for model owners to protect the intellectual property (IP) of their DRL-based products. DRL models are generally deployed as black boxes inside the applications, so the model designs and parameters are not disclosed to the public, which prevents direct plagiarization or modification by malicious users. From the adversarial perspective, we want to investigate the following question in this paper: \emph{is it possible to extract the proprietary DRL model with only oracle access?} This is known as model extraction attack, which has been widely studied for supervised DL models \cite{oh2019towards, tramer2016stealing, duddu2018stealing}. However, the possibility and feasibility of extracting DRL models have not been explored yet. We make the first step towards this goal. 


Although various extraction techniques were designed against supervised DL models, challenges arise when applying them to DRL models due to significant differences of model features and scenarios. 
First, some attack approaches can only extract very simple models (e.g., two-layer neural networks \cite{jagielski2020high}) and datasets (e.g., MNIST \cite{oh2019towards}). 
In contrast, DRL models usually have more complicated and deeper network structures to handle complex tasks, which cannot be extracted by the above techniques. 
Second, the adversary in the DRL environments has less observable information for model extraction. Past works assume the adversary has access to the prediction confidence scores \cite{jagielski2020high,oh2019towards,tramer2016stealing}, gradients \cite{milli2019model} or the side-channel execution characteristics \cite{hu2020deepsniffer,batina2019csi}. In our black-box DRL scenario, the adversary can only observe the predicted actions from the DRL model, which are not enough to recover the DRL model with the above methods. 
Third, supervised DL models perform predictions over discrete input samples, which are independent of each other. However, DRL is designed to solve Markov Decision Process (MDP) problems. Individual input samples cannot fully reflect the inherent features and behaviors of DRL models. The adversary will lose the information of temporal relationships if he only observes these discrete data. Besides, compared to supervised DL models, DRL models are more stochastic and their behaviors highly depend in the environments with different transition probabilities.

In this paper, we propose a novel model extraction methodology for DRL models, which can overcome the above challenges. The key insight of our methodology is that \emph{the process of DRL model extraction is equivalent to an imitation learning task}. Imitation learning \cite{hussein2017imitation} is a promising technique to learn and mimic the behaviors of an expert instead of training a policy directly based on a reward function. This exactly matches the scenario of DRL model extraction, which is to mimic the behaviors of the targeted model. We formalize the DRL model extraction attack, and prove its equivalence with imitation learning. 

Although there exists such a close connection, it is not easy to directly apply imitation learning for DRL model extraction. Given a task, DRL policies obtained from different training algorithms have significant differences in terms of behaviors and performance (as discussed in Section \ref{sec:bg-drl}). In imitation learning, a DRL algorithm needs to be specified as an oracle. If the adversary does not know the training algorithm employed by the targeted model (which is assumed in our threat model), he will have to randomly pick one algorithm as the basis of imitation learning. The imitated model will behave quite differently if the adversary selects a different algorithm from the victim model. 

Our solution addresses this issue by identifying the algorithm family of the DRL model first. We build an algorithm classifier, which is able to predict the algorithm family of a black-box DRL model based on its behaviors and environmental states. Specifically, (1) we use timing sequences of actions as the feature of a DRL model to characterize its decision process and interaction with the environment. (2) We utilize Recurrent Neural Networks as the structure of the classifier for training and prediction, which can better understand the temporal relationships inside the feature sequence. (3) For each DRL model, we generate different feature sequences in environments initialized with different random seeds. This guarantees that the training set of the classifier is comprehensive and includes different behaviors of the same algorithm family.

With the identified training algorithm, we are now able to use state-of-the-art techniques from the imitation learning community to conduct DRL model extraction attacks. Specifically, we adopt Generative Adversarial Imitation Learning (GAIL) \cite{ho2016generative}, which trains a discriminative model and generative model to imitate the behaviors of the targeted DRL policy. The contest between these two models can guarantee that our replication has very similar behaviors as the targeted one on the same environment. 

Our attack solution can extract models with high similarity of training algorithm families, behaviors and performance as the targeted models. Extensive experiments show the adversary can achieve 100\% of performance accuracy, and 97\% of behavior fidelity for various tasks and algorithms. We further provide two use cases to demonstrate that this attack can (1) significantly enhance the adversarial attacks by increasing the transferability of adversarial examples, and (2) easily invalidate the IP protection of watermarking mechanisms. This can bring severe economic loss as well as safety threats to the DRL-based applications. We expect this study can raise people's awareness about the privacy threats of DRL models, as well as the necessity of defense solutions.

The key contributions of this paper are:
\begin{packeditemize}
    \item We formally define DRL model extraction and prove the equivalence between this attack and imitation learning.
    \item We propose a novel method to identify the algorithm family of a black-box DRL model.
    \item We propose an end-to-end DRL model extraction attack based on imitation learning.
    \item We conduct extensive experiments and case studies to illustrate the effectiveness and severity of our attack method.
\end{packeditemize}

The rest of this paper is organized as follows. Section \ref{sec:background} introduces the background of DRL. Section \ref{sec:attack} presents the threat model and formal definition of DRL model extraction. We also give formal analysis of its equivalence with imitation learning. Section \ref{sec:algorithm} exhibits our DRL extraction attack. Section \ref{sec:exp} evaluates the effectiveness of the proposed attack, followed by two case studies in Section \ref{sec:adv-exp}. Section \ref{sec:discussion} discusses some open issues. We summarize related works in Section \ref{sec:work} and conclude in Section \ref{sec:conclusion}.

\section{Background}\label{sec:background}
\subsection{Reinforcement Learning}
Reinforcement Learning (RL) is a type of machine learning technology that enables an agent to interact with an environment and learn an optimal policy by maximizing the cumulative reward from the environment.
A RL problem can be modeled as a Markov Decision Process (MDP), represented as a tuple $(\mathbb{S}, \mathbb{A}, \mathbb{P}, r, \gamma)$, where 
\begin{packeditemize}
    \item $\mathbb{S}$ is a finite state space, which contains all the valid states in the environment;
    \item $\mathbb{A}$ is a finite action space, from which the agent chooses an action as the response to the state it observes;
    \item $\mathbb{P}: \mathbb{S} \times \mathbb{A} \times \mathbb{S} \rightarrow [0,1]$ is the state transition probability. For two states $s, s'$ and an action $a$, the output of $\mathbb{P}$ denotes the probability that $s$ is transited to $s'$ by taking action $a$;
    \item $r(s,a)$ is the reward function that outputs the expected reward if the agent takes action $a$ at state $s$;
    \item $\gamma \in [0,1)$ is the discount factor that denotes how much the agent cares about rewards in the distant future relative to those in the immediate future. A smaller factor values places more emphasis on the immediate rewards.
\end{packeditemize}

A RL policy $\pi: \mathbb{S} \times \mathbb{A} \rightarrow [0, 1]$ describes the behaviors of an agent in an MDP. It denotes the probability of an action $a \in \mathbb{A}$ the agent will take on a state $s\in \mathbb{S}$. Then the goal of a reinforcement learning is to identify a policy $\pi^*$ that maximizes the expected cumulative rewards:
    \begin{equation}
        \pi^* = \argmax_{\pi}\sum_{t=t_0}^{T} \sum_{a_t \in \mathbb{A}(s_t)} \gamma^{t-t_0} r(s_t, a_t) \pi(s_t, a_t)
    \end{equation}
where $T$ is the termination timestep. In practice, instead of observing $\pi$'s action distribution probability on a certain state $s$, we usually only capture $\pi$'s optimal action $a$, and thus the policy can be formulated as $\pi(s) = a$.

\subsection{RL with Deep Neural Networks}\label{sec:bg-drl}
Despite RL has been studied for a long time and achieved tremendous success in some tasks \cite{ng2006autonomous}, traditional approaches to solve the RL problem lack scalability and are inherently limited to relatively simple environments.
Deep Reinforcement Learning is then introduced, which adopts Deep Neural Networks (DNNs) to understand and interpret complex environmental states, and make the optimal decisions. Due to the great capabilities of neural networks in learning high-dimensional feature representations and function approximation properties, DRL can achieve outstanding performance in mastering human-level control policies in various tasks with high-dimensional states \cite{lillicrap2015continuous, zhu2017target}. 
There are generally three common approaches to solve reinforcement learning tasks.


\noindent\textbf{Value-based Approach.}
The agent performs certain actions according to its policy to maximize its reward. The optimal behaviors of the policy $\pi$ are defined by the Q-function which obeys the following Bellman equation,
\begin{equation}
Q^\pi(s, a) = \mathbb{E}[r(s,a) + \gamma \max_{a'}Q^\pi(s', a')].
\end{equation}

This equation shows the maximum return value $Q^\pi(s, a)$ from state $s$ and action $a$ is the sum of the immediate reward $r$ and the return obtained following the optimal policy until the end of the episode.
When the agent interacts with the environment and transits from state $s$ to the next one $s'$, this approach estimates the value of $Q^\pi(s, a)$. Once we obtain all the values of each state-action pair, we can select the optimal action $a^*$ with the highest Q value on the current state $s$ (i.e., $a^* = \arg\max_{a}Q^\pi(s, a)$).

For most problems, however, it is impractical to represent the Q-function as a table containing the values of all possible combinations of $s$ and $a$.
Deep Q-Learning (DQN) was introduced to approximate the Q-value for each action. This algorithm has been extensively used to play GO \cite{silver2016mastering} and Atari games (at superhuman level) \cite{mnih2015human}. However, DQN cannot be adopted in the tasks with continuous action space since the algorithm requires to learn all the possible Q values.

\noindent\textbf{Policy-based Approach.}
This solution attempts to identify the optimal policy directly other than estimating all the state-action values.
Typical examples include REINFORCE \cite{williams1992simple} which regards an RL policy as a function $\pi_\theta(s, a) = \mathbb{P}(a|s, \theta)$ and optimizes it by applying the policy gradient technique. 
To extend this approach to complex tasks, researchers modeled the policy $\pi_\theta$ with DNNs such as Multilayers Perceptron and Convolutional Neural Networks. The objective function of the policy network is defined as the expectation of the total discounted rewards on all the states of a trajectory in an episode,

\begin{equation}\label{equ: policy_based_obj_F}
    J(\theta) = \mathbb{E}_{(s_t, a_t)_{\verb|~|} \pi_\theta}[\sum_{t=0}^{\infty}\gamma^tr_t],
\end{equation}

This approach has some limitations. Since the expected reward depends on all the states within the episode, if the agent receives a high reward, it tends to conclude all the actions taken on all the states were good, even if some of them were really bad. Moreover, as the network can only be updated after one episode is completed and the sample of one trajectory can be used for only once, data collection and sample utilization are inefficient in this approach.

\noindent\textbf{Actor-Critic Approach.}
This is an effective method to overcome the common drawbacks of policy-based methods.
It learns both a policy (actor) and a state value function (critic) to reduce variance and accelerate learning. An actor is formed with a policy network, similar as the policy-based approach, which performs action $a$ on the current state $s$. The critic is a Q-function represented by a network to estimate how good the action $a$ given by the actor at state $s$ is.  
As specified in \cite{mnih2016asynchronous}, the model can be learned with two objective functions: a) the objective function of the actor is the same as Equation \ref{equ: policy_based_obj_F}; b) the advantage function $A\pi_\theta(s, a)$ of the critic represents the extra reward the agent gets if it takes this action:
\begin{equation}
    A\pi_\theta(s, a) = Q_{\pi_\theta}(s, a) - (r + \gamma\max_{a'}Q_{\pi_\theta}(s', a')).
\end{equation}
Therefore, the actor-critic method combines the advantages of both policy-based and value-based methods. The actor enjoys the benefits of computing continuous actions without the need for optimization on a Q-function. The critic’s merit is that it supplies the actor with low-variance knowledge of the performance. These properties make the actor-critic methods an attractive reinforcement learning solution. State-of-the-art algorithms include Proximal Policy Optimisation (PPO) \cite{schulman2017proximal}, Actor-Critic with Experience Replay (ACER) \cite{wang2016sample} and Actor Critic using Kronecker-Factored Trust Region (ACKTR) \cite{wu2017scalable}.

\section{Attack Formulation and Analysis} \label{sec:attack}

\subsection{Threat Model}
We consider a standard scenario of model extraction, where the adversary has oracle access to the targeted model. Specifically, the adversary can operate the DRL model in the normal environment without manipulating the states. He can observe the states and the model's corresponding actions. This is common when the DRL application is deployed in the open and public environment. For instance, in DRL-based robotics, the adversary can deploy the robot in certain environments, and observe its corresponding reactions. In the application of video game playing, the adversary can start a DRL player with different situations, and collect its strategies. 

Different from prior works of extracting supervised DL models \cite{jagielski2020high,oh2019towards,tramer2016stealing,carlini2020cryptanalytic}, we consider more restricted capabilities for the adversary: he has \textit{no} knowledge about the targeted DRL model, including the model structures and parameter values, training algorithms and hyperparameters, etc. He is \textit{not} able to obtain the confidence score of each possible action predicted by the targeted DRL policy. This makes the attack more practical, yet more challenging as well. 

\subsection{Formal Definition of DRL Model Extraction}

According to \cite{jagielski2020high}, there are two basic categories of model extraction attacks. (1) \emph{Accuracy extraction} aims to reproduce a model which can match or exceed the accuracy of the targeted model. The replicated model does not need to match the predictions of the oracle precisely. (2) \emph{Fidelity extraction} attempts to recover a model with the same behaviors as the victim one, even for incorrect prediction results. In this paper, we focus on both accuracy and fidelity extraction of DRL models, which is more threatening. 

It is worth noting that we only consider to extract an approximate copy of the targeted model with similar rewards and behaviors, rather than the exact values of model parameters. Parameter extraction is difficult for complex DL models \cite{jagielski2020high}, as different values can give the same model outputs, which are indistinguishable from the adversary. Besides, the adversary cannot identify the values of dead neurons which never contribute to the model output. 

Let $\mathbb{F}$ be the hypothesis class of a DRL extraction attack. The instance and output spaces of $\mathbb{F}$ are denoted as $\mathbb{A} \times \mathbb{S}$ and $[0, 1]$, respectively. We refer to the sequence $S = \{s_1, s_2, ..., s_T\}$, $s_i \in \mathbb{S}$ as the environmental states of one episode, and $A = \{a_1, a_2, ..., a_T\}$, $a_i \in \mathbb{A}$ as the corresponding actions of the targeted model. The adversary’s goal is to find a policy $\pi \in \mathbb{F}$ such that $\pi$ performs competitively with the targeted policy $\pi^*$. 
Following the assumptions in the threat model where the adversary can operate the model in an environment and learn $\pi$ by observing the corresponding actions of $\pi^*$. We now formally define the DRL extraction attack:

\begin{definition}\label{def:attack}($(\delta, \xi)$-DRL Extraction Attack)
    Let $\pi$ be the reproduced policy in a DRL extraction attack $\mathcal{A}$. We call $\mathcal{A}$ a $(\delta, \xi)$-DRL extraction attack if for $\forall S = \{s_1, s_2, ..., s_T \}$, the difference between the expected cumulative rewards of $\pi^*$ and $\pi$ is smaller than $\delta$ (accuracy extraction) and the expected distance between the action distributions of $\pi^*$ and $\pi$ is smaller than $\xi$ (fidelity extraction), i.e.,
    \begin{align}\label{equ:reward}
        & |E[\sum_{i=1}^{T}r(s_i,a_i^*)] - E[\sum_{i=1}^{T}r(s_i,a_i)]| \leq \delta,\\\label{equ:behavior}
        & E[d(\{a_1^*, a_2^*, ..., a_T^* \}, \{a_1, a_2, ..., a_T\})] \leq \xi,
    \end{align}
    where $\{a_1^*, a_2^*, ..., a_T^* \}$ and $\{a_1, a_2, ..., a_T\}$ are the corresponding actions of $\pi^*$ and $\pi$ on the sequence of states. $E$ is the expected value of a random variable.
\end{definition}

\subsection{Imitation Learning}

Imitation learning  was originally introduced for learning from human demonstrations. Then its concept was applied to the domain of artificial experts, such as RL agents. Given a task, an imitation model acquires the corresponding skills from expert demonstrations by learning a mapping between observations and actions \cite{hussein2017imitation}. A formal definition of imitation learning is described below:

\begin{definition}(Imitation Learning System)
    Let $\mathbb{F}'$ be a hypothesis class with instance space $\mathbb{X}$ and output space $\mathbb{Y}$. An imitation learning system for $\mathbb{F}'$ is given by two entities ($\mathcal{A}$, $\mathcal{O}$): 
    $\mathcal{O}$ is the teacher that plays as an oracle with the optimal hypothesis. $\mathcal{A}$ is the agent that searches or learns the optimal hypothesis from $\mathbb{F}'$ by observing $\mathcal{O}$'s behaviors $\{(x, y)\}$, $x\in \mathbb{X}, y \in \mathbb{Y}$. 
\end{definition} 

The agent $\mathcal{A}$ adopts an imitation learning algorithm $\mathcal{L}$ to learn the behaviors of the teacher $\mathcal{O}$ in the system. We define such an algorithm below:

\begin{definition}($\epsilon$-Imitation Learning Algorithm)
    Let $\mathcal{L}$ be an algorithm used by $\mathcal{A}$ to learn the behaviors of $\mathcal{O}$.
    Let $\pi$ be an ``expert'' that describes $\mathcal{O}$'s behaviors, and 
    $\pi'$ be the hypothesis learned by $\mathcal{L}$. We say $\mathcal{L}$ is an $\epsilon$-imitation learning algorithm if for $\forall \{s_1, s_2, ..., s_T \}$, the expected value of the distance between the corresponding action distributions of $\pi^*$ and $\pi'$ is smaller than $\epsilon$, i.e.,
    \begin{equation}
        E[d(\{a_1^*, a_2^*, ..., a_T^* \}, \{a_1', a_2', ..., a_T'\})] \leq \epsilon,
    \end{equation}
    where $d(\cdot,\cdot)$ is a measure of two probability distributions.
\end{definition}
Intuitively, an algorithm is an $\epsilon$-imitation learning algorithm if the action distribution of the hypothesis learned by it is similar to the corresponding teacher with the distance bounded by the threshold $\epsilon$. In our experiments, we adopt the Jensen-Shannon divergence to measure the action distributions of two models. 

\subsection{Connections between DRL Model Extraction and Imitation Learning}
Based on the descriptions in the previous two sections, we can observe that DRL model extraction closely resembles the DRL imitation learning scenario. In this section, we formally prove that DRL model extraction can be cast as imitation learning under the assumption that the hypothesis classes of the DRL extraction attack and imitation learning are the same: 

\begin{theorem}
\label{the:equivalence}
    Let $\mathbb{F}^*$ be the hypothesis class for searching the targeted policy $\pi^*$ in an imitation learning system, $\mathbb{F}$ be the hypothesis class of a model extraction adversary with the same instance and output spaces as $\mathbb{F}^*$. We assume the adversary owns an $\epsilon_{\delta}$-imitation learning algorithm, and uses it to learn a policy $\pi'$. Let $t_{\epsilon}$ be the expected value of the number of different actions between $\pi^*$ and $\pi'$. Then this adversary is able to conduct a $(\delta, \xi)$-DRL extraction attack if (1) $\mathbb{F} = \mathbb{F}^*$, (2) the maximal reward of $\pi'$ on $t_{\epsilon}$ actions is smaller than $\delta$, and (3) $\xi \leq \epsilon_{\delta}$.
\end{theorem}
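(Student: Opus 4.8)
The plan is to instantiate the extracted policy as $\pi = \pi'$, the policy produced by the adversary's $\epsilon_\delta$-imitation learning algorithm, and then verify the two defining inequalities of a $(\delta,\xi)$-DRL extraction attack from Definition \ref{def:attack} one at a time. Condition (1), $\mathbb{F} = \mathbb{F}^*$, is precisely what legitimizes this choice: the imitation system searches $\mathbb{F}^*$, and since the adversary's hypothesis class $\mathbb{F}$ coincides with it (they already share the same instance and output spaces by assumption), the learned hypothesis $\pi'$ is a valid element of $\mathbb{F}$ and may be output directly by the attack. The remaining two conditions will then be used to discharge the fidelity and accuracy requirements respectively.

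For the fidelity requirement \eqref{equ:behavior}, I would invoke the $\epsilon_\delta$-imitation guarantee almost verbatim. By definition of an $\epsilon_\delta$-imitation algorithm, for every state sequence $\{s_1,\dots,s_T\}$ we have $E[d(\{a_1^*,\dots,a_T^*\},\{a_1,\dots,a_T\})] \le \epsilon_\delta$, where the $a_i$ are the actions of $\pi = \pi'$. Condition (3) then relates $\xi$ and $\epsilon_\delta$ to close the gap, yielding $E[d(\cdot,\cdot)] \le \xi$, which is exactly \eqref{equ:behavior}. This half is essentially a restatement of the imitation property and requires no real work beyond matching the two distance bounds.

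The substantive part is the accuracy requirement \eqref{equ:reward}. Here I would first rewrite the difference of expected cumulative rewards as a single expectation of summed per-timestep differences, $E[\sum_{i=1}^T (r(s_i,a_i^*) - r(s_i,a_i))]$, exploiting that both policies are evaluated on the \emph{same} state sequence $S$. The key observation is that every timestep on which $\pi^*$ and $\pi$ agree ($a_i^* = a_i$) contributes exactly zero, so only the disagreeing timesteps survive. By hypothesis the expected number of such timesteps is $t_\epsilon$, and each surviving term is bounded in magnitude by the maximal per-action reward. Thus the whole quantity is dominated by the maximal reward accrued over $t_\epsilon$ actions, which condition (2) assumes to be smaller than $\delta$; pushing the absolute value inside the expectation via the triangle inequality then gives \eqref{equ:reward}.

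The main obstacle I anticipate is making the passage from the distributional distance bound $\epsilon_\delta$ to the mismatch count $t_\epsilon$ rigorous, since $d(\cdot,\cdot)$ measures a divergence between action \emph{distributions} (the Jensen--Shannon divergence in the experiments) whereas $t_\epsilon$ counts realized action disagreements. A clean argument must convert a bounded expected divergence into a bounded expected number of mismatches, and then argue that the ``maximal reward on $t_\epsilon$ actions'' in condition (2) correctly upper-bounds the expected reward discrepancy even though the mismatches occur at random timesteps. To keep this tractable I would treat $t_\epsilon$ as the expected mismatch count furnished directly by the theorem's hypotheses and carry the reward estimate in expectation throughout, so that condition (2) applies without appealing to any worst-case realization of the disagreement set.
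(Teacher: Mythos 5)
Your proposal follows essentially the same route as the paper's proof: output the imitated policy $\pi'$ as the extracted model (the paper packages this as an explicit two-step query-then-imitate construction), discharge the fidelity bound \eqref{equ:behavior} directly from the $\epsilon_{\delta}$-imitation guarantee together with $\xi \leq \epsilon_{\delta}$, and bound the reward gap \eqref{equ:reward} by cancelling the agreeing timesteps and dominating the $t_{\epsilon}$ disagreeing ones by the maximal reward achievable on $t_{\epsilon}$ actions, which condition (2) caps at $\delta$. The distributional-divergence-versus-mismatch-count issue you flag is real, but the paper sidesteps it exactly as you propose to, by taking $t_{\epsilon}$ as a hypothesis of the theorem rather than deriving it from $\epsilon_{\delta}$.
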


\begin{proof}
\begin{figure}[t]
    \centering
    \includegraphics[width=\columnwidth]{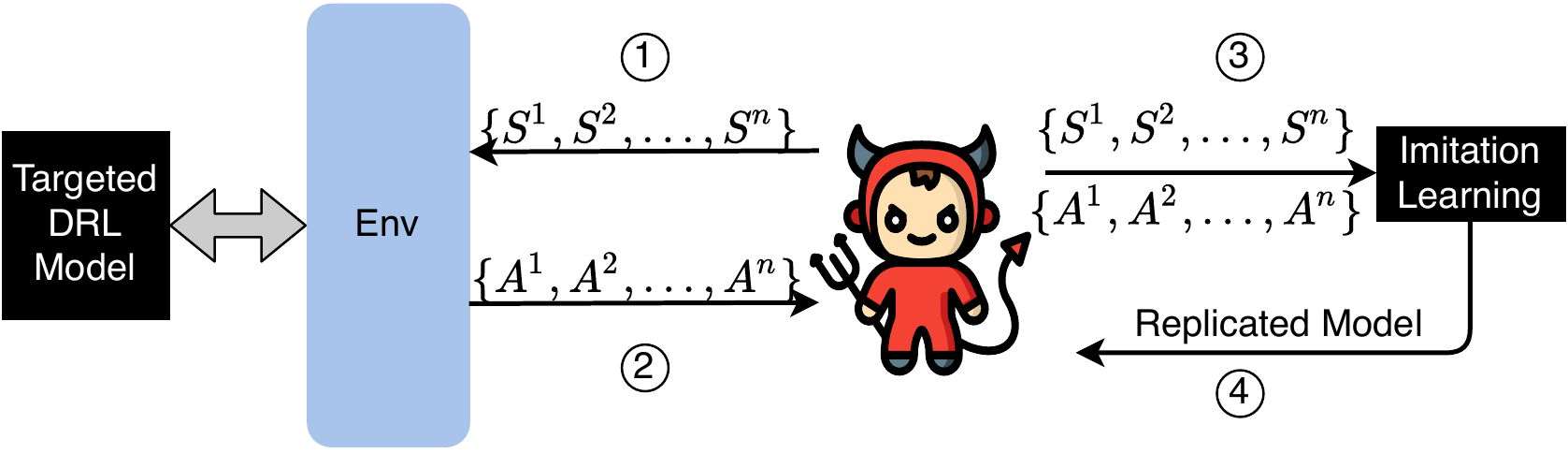}
    \caption{Constructing a $(\delta, \xi)$-DRL extraction attack based on an $\epsilon_{\delta}$-imitation learning algorithm.}
    \label{fig:proof}
\end{figure}

We construct a two-step DRL extraction attack, as shown in Figure \ref{fig:proof}. At step 1, the adversary randomly selects some states $\{S^1, S^2,..., S^n\}$, where $S^i = \{s_1^i,...,s_T^i\}$, and feeds them to the targeted DRL model (\raisebox{.5pt}{\textcircled{\raisebox{-.9pt} {1}}}). He then observes the corresponding actions from this model $\{A^1, A^2,..., A^n\}$,  where $A^i = \{a_1^i,...,a_T^i\}$ (\raisebox{.5pt}{\textcircled{\raisebox{-.9pt} {2}}}).
At step 2, with the collected states and actions, the adversary adopts an $\epsilon_{\delta}$-imitation learning algorithm to learn a new policy that can mimic the actions from the states (\raisebox{.5pt}{\textcircled{\raisebox{-.9pt} {3}}}). The adversary can repeat the two steps for multiple times until a qualified policy is acquired, which will be the output of the model extraction attack (\raisebox{.5pt}{\textcircled{\raisebox{-.9pt} {4}}}). 


We now prove that this two-step attack is a $(\delta, \xi)$-DRL extraction attack.
One can easily find that the attack satisfies Equation \ref{equ:behavior} because $\xi \leq \epsilon_{\delta}$. We show that the attack also satisfies Equation \ref{equ:reward} as follows.
Since an $\epsilon_{\delta}$-imitation learning algorithm is adopted, for $\forall \{s_1, s_2, ..., s_T \}$, we have
\begin{equation}
    E[d(\{a_1^*, a_2^*, ..., a_T^* \}, \{a_1', a_2', ..., a_T'\})] \leq \epsilon_{\delta}.
\end{equation}

$t_{\epsilon}$ is the expected number of actions differ in $\pi^*$ and $\pi'$. Here, we denote $\{(s_1^{max},a_1^{max}),...,(s_{t_{\epsilon}}^{max},a_{t_{\epsilon}}^{max}) \}$ as the state-action pairs that obtain the maximal reward than other $t_{\epsilon}$ pairs, i.e., 
$$\delta_{t_{\epsilon}} = \sum_{i=1}^{t_{\epsilon}}r(s_i^{max},a_i^{max}) \geq \sum_{i=1}^{t_{\epsilon}}r(s_i,a_i'), \ \text{for} \ \forall \ s_i \in \mathbb{S}.$$

Since $\delta_{t_{\epsilon}} < \delta$, we have
\begin{align}
    &|E[\sum_{i=1}^{T}r(s_i,a_i^*)] - E[\sum_{i=1}^{T}r(s_i,a_i)]|\\
    \leq&E[\sum_{i=1}^{t_{\epsilon}}r(s_i,a_i')] \leq \delta_{t_{\epsilon}} \leq \delta,
\end{align}
\end{proof}


\section{Proposed Attack}\label{sec:algorithm}
\begin{figure*}[htp]
    \centering
    \includegraphics[width=0.85\textwidth]{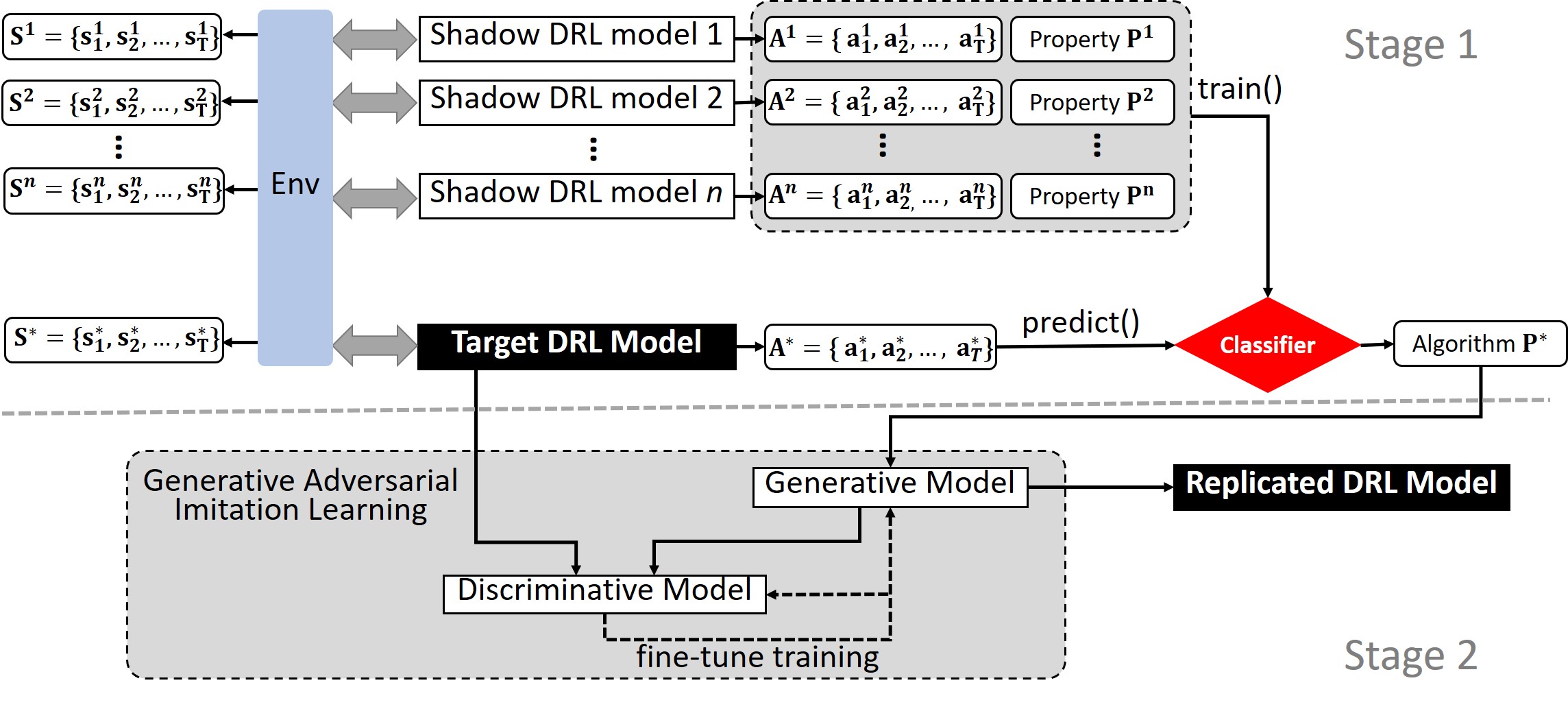}
    \caption{Overview of our proposed DRL model extraction attack}
    \label{fig:algorithm}
\end{figure*}

Based on Theorem \ref{the:equivalence}, we can adopt well-developed imitation learning techniques to extract DRL models. However, it also requires the hypothesis classes of the two problems must be the same. In another word, the adversary should use the same training algorithm of the targeted DRL model as the imitation learning algorithm. Unfortunately, we consider the adversary has no prior knowledge about the training algorithm of the DRL model. This makes it difficult to directly apply imitation learning solutions for model extraction. Due to the distinct features between various DRL algorithms, using a different imitation learning algorithm can hardly recover the model with high fidelity, as we will evaluate in Section \ref{sec:exp}.

To address the above issue, we propose a new methodology to identify the training algorithm family (i.e., the hypothesis class) of a black-box model. The key idea of this approach is that DRL models trained from different algorithms can yield different behaviors from the same states. Such temporal sequences can be distinguished by machine learning classifiers, which can leak the training algorithm to the adversary from the model's dynamics.


Hence, we present our end-to-end attack approach that consists of two stages. At the first stage, we construct a classifier, which can be used to predict the \emph{training algorithm family} of a given black-box DRL model from its runtime behaviors. At the second stage, based on the extracted information, we adopt state-of-the-art imitation learning techniques to generate a model with similar \emph{behaviors} as the victim one. Figure \ref{fig:algorithm} illustrates the methodology overview, and Algorithm \ref{algo:algorithm} describes the detailed steps. 

\begin{algorithm}[htb]
\SetAlgoLined
\SetNoFillComment
\LinesNumbered
\caption{Extracting DRL models}
\label{algo:algorithm}
\KwIn{Targeted model $M^*$, DRL environment $env$}
\KwOut{Replicated model $M'$}
\tcc{Stage 1}
Set up a set of random seeds $\mathbb{S}$, reward threshold $R$, action sequence length $T$ \;
Select algorithm family pool $\mathbb{P}$; 
Dataset $\mathbb{D} = \emptyset$ \;
 \For{\emph{each} p $\in \mathbb{P}$}{\label{line:for_begin}
   \For{\emph{each} s $\in \mathbb{S}$}{
    $env$.\texttt{initialize($s$)}\;
    $m$ = \texttt{train\_DRL}($env$, $p$)\;
    \If{\texttt{\emph{eval}}($m$, $env$) $>$ $R$}
    {$A$ = \texttt{GenSequence}($m$, $env$, $T$)\;
     $\mathbb{D}$.\texttt{add}([$A$, $p$])\;
    }
   }
 }\label{line:for_end}
 $C$ = \texttt{train\_RNN}($\mathbb{D}$)\;

\tcc{Extract algorithm family}
$A^*$ = \texttt{GenSequence}($M^*$, $env$, $T$)\;
$P^*$ = $C$.\texttt{predict}($A^*$)

\tcc{Stage 2}
$M'$ = \texttt{ImitationLearning}($M^*$, $P^*$, $env$)\;
\While{\texttt{\emph{eval}}($M'$, $env$) $<$ \texttt{\emph{eval}}($M^*$, $env$)}{
$M'$ = \texttt{ImitationLearning}($M^*$, $P^*$, $env$)\;
}
\KwRet{$M'$}
\end{algorithm}

\subsection{Identifying Training Algorithm Families} \label{ssec:stage_1}
As the first stage, we train a classifier, whose input is a DRL model's action sequence, and output is the model's algorithm family. With this classifier, we can identify the algorithm family of an arbitrary black-box DRL model.

\noindent{\textbf{Dataset preparation.}} 
A dataset is necessary to train this classifier. 
It should include action sequences of DRL models trained from different algorithms. To this end, we first train a quantity of shadow DRL models in the same environment but with various algorithms, and then collect their behaviors to form this dataset. 

Specifically, we set up an algorithm family pool $\mathbb{P}$ that includes all the DRL algorithm families in our consideration. We also prepare a set $\mathbb{S}$ of random seeds for environment initialization. For each algorithm family in $\mathbb{P}$, we train some DRL models in the environments initialized by different random seeds in $\mathbb{S}$. We evaluate the performance of each trained DRL model by comparing its reward with a reward threshold $R$: we only select the DRL models whose rewards are higher than $R$. For each qualified model, we collect $N$ different state-action sequences with a length of $T$: $\{(s_1, a_1), (s_2, a_2), ...(s_T, a_T)\}$. Then samples are generated with the action sequences ($A=\{a_1, a_2, ...a_T\}$) as the feature and the training algorithm family as the label, to construct the dataset.

\noindent{\textbf{Training.}}
We adopt the Recurrent Neural Network (RNN) as the classifier\footnote{In Section \ref{sec:exp} we also implemented and tested other network structures as well. RNN gives the best results for recognizing sequential features.}. An RNN is competent of processing sequence data of arbitrary lengths by recursively applying a transition function to its internal hidden state vector of the input. It is generally used to map the input sequence to a fixed-sized vector, which will be further fed to a softmax layer for classification. However, vanilla RNNs suffer from the gradient vanishing and exploding problem: during training, components of the gradient vector can grow or decay exponentially over long sequences. To address this problem, we adopt the Long Short-Term Memory (LSTM) network \cite{hochreiter1997long} in our design. LSTMs can selectively remember or forget things regulated by a set of gates. Each gate has a sigmoid neural net layer and a pointwise multiplication operation, which can filter the information through the network. Therefore, LSTM units can maintain information in memory for a long period under the control of the gates.

To train the classifier over the prepared dataset, for each input sequence $A = \{a_1, a_2, ... , a_T\}$, we first apply a set of LSTM layers to obtain its vector representation. Then we attach a fully-connected layer and a non-linear softmax layer after the LSTMs to output the probability distribution over all classes of possible model algorithms. We use cross-entropy loss to identify the optimal parameters for this classifier by minimizing the loss function.

\noindent{\textbf{Extracting algorithm families.}}
With this RNN classifier, we are now able to predict the training algorithm family of a given back-box DRL model. We operate this targeted model in the same environment with certain random seed and collect the action sequence for $T$ rounds. We query the classifier with this sequence and get the probability of each candidate algorithm family. We select the one with the highest probability as the attack result. To further increase the confidence and eliminate the stochastic effects, we can run the targeted model in different initialized environments and collect the sequences for predictions. We choose the most-predicted label as the targeted model's training algorithm family. 

\subsection{Extracting Models via Imitation Learning}
\label{stage_2}

With the extracted training algorithm family, the adversary can now perform a $(\delta, \xi)$-DRL extraction attack via imitation learning. Various techniques have been designed to imitate the behaviors of DRL models. For instance, inverse optimal control was adopted in \cite{finn2016guided} to learn the behaviors from expert's demonstrations. Inverse Reinforcement Learning (IRL) \cite{russell1998learning} was then proposed to improve the learning effects when no clear reward function is available. Generative Adversarial Imitation Learning (GAIL) \cite{ho2016generative} was proposed to directly learn policies using Generative Adversarial Framework. Deep Q-learning from Demonstration (DQfD) \cite{hester2018deep} combined temporal difference updates with supervised classification of the expert's actions to accelerate the training of DRL models.



In our implementation, we adopt the GAIL framework for model extraction, as it can be easily converted to an $\epsilon_{\delta}$-imitation learning algorithm. 
GAIL is a model-free method that can obtain significant performance gains in imitating complex behaviors in large-scale and high-dimensional environments. It generalizes Inverse Reinforcement Learning \cite{russell1998learning} by formulating the imitation learning as minimax optimization, which can be solved by alternating gradient-type algorithms in a more scalable and efficient manner.


Specifically, similar as Generative Adversarial Networks, GAIL is composed of two neural networks, which contest with each other during the imitation process: a generative DRL model $G$, and a discriminative model $D$ whose job is to distinguish the distribution of data generated by the generator $G$ from the ground-truth data distribution of the expert DRL model.
Given a sequence of states $S_i$ and the corresponding actions $A_i$, $D$ outputs the probability of the sequences generated by the expert model. During the training process, $D$ is optimized to accurately distinguish the state-action sequences generated by $G$ and the expert model. In particular, GAIL updates $D$ iteratively using the gradient ascent technique to minimize the loss function:
\begin{equation}
    L_D = - \mathbb{E}_{G}[\log (D(S_i, A_i))] - \mathbb{E}_{\pi^*}[\log (1 - D(S_i, A_i^*))],
\end{equation}
where $S_i, A_i$ are generated by $G$, while $S_i, A_i^*$ are produced by the expert model $\pi^*$.


A satisfactory generator $G$ should behave similar as the expert model. To this end, GAIL requires $G$ to increase the probabilities of $D$ on the sequences it generates and optimize $G$ by minimizing the following loss function,
\begin{equation}
        L_G = - \sum_{i=1}^n D(S_i, A_i) P(S_i, A_i|G),
\end{equation}
where $P(S_i, A_i|G)$ is the probability that the sequences $S_i, A_i$ occur given the generator $G$. 

Then the overall optimization goal of GAIL is to minimize the Jensen-Shannon divergence between the behaviors of $G$ and the expert model \cite{ho2016generative}:
\begin{equation}
    \argmin_G \argmax_D L(G, D) = \argmin_G d_{JS}(\rho_G || \rho_{\pi^*}) - R(G)
\end{equation}
where $R(\pi_{G})$ is the policy regularizer and $d_{JS}(p||q) = d_{KL}(p||(p+q)/2) + d_{KL}(q||(p+q)/2)$ represents the Jensen-Shannon divergence between two distributions $p$ and $q$. $\rho_{\pi^*}$ is a joint distribution of states and actions following $\pi$. 
During the learning process, GAIL alternately trains the generative and discriminative models until $d_{JS}(\rho_G || \rho_{\pi^*})$ is smaller than $\epsilon_{\delta}$.

Considering the stochasticity of the imitation learning process, it is possible that the model cannot reach the same reward although it has the same behaviors as the targeted model. Therefore, we repeat the GAIL process until a qualified model is obtained which has very similar performance (i.e., reward) as the targeted model.

\section{Evaluation}\label{sec:exp}
\subsection{Implementation and Experimental Setup}

Our attack approach is general and applicable to various reinforcement learning tasks. Without loss of generality, we consider two popular environments: Cart-Pole and Atari Pong \cite{baselines}. For each environment, we train DRL models with five mainstream DRL algorithm families (DQN \cite{mnih2015human}, PPO \cite{schulman2017proximal}, ACER \cite{wang2016sample}, ACKTR \cite{wu2017scalable} and A2C \cite{mnih2016asynchronous}). We believe these two tasks with five algorithms are representative, as they are commonly used in academia for evaluating reinforcement learning applications. These algorithms also demonstrate great potential in real-world products. All the model configurations and hyperparameters follow the default setup in the OpenAI Baselines framework \cite{baselines}.

For the algorithm classification, we select 50 shadow models from each DRL algorithm and environment whose rewards are higher than a task-specific threshold $R$, defined in the OpenAI framework. We consider different sequence lengths $T$ (50, 100 and 200), and compare their impacts on the prediction accuracy. For each shadow DRL model, we collect 50 action sequences as the training inputs of our classifier. Therefore, for Cart-Pole and Atari Pong, the sizes of the datasets from 250 shadow models are both 12,500. To evaluate the classifier, we randomly split the trajectory data to training and test sets with a ratio 8:2.
During the training process, the initial learning rate is set to 0.005 with a decay factor of 0.7 when loss plateaus occur. The batch size is set to 32.
Without loss of generality, we set $\xi = \epsilon_{\delta}$. Instead of predefining $\epsilon$ and the corresponding $\epsilon_{\delta}$ for each task, we stop the training after $N=100$ iterations in.

\subsection{Results of Algorithm Family Identification}
\label{ssec:result}

\noindent{\textbf{Comparisons of different classification models.}}
As the first study, we investigate the effectiveness of different machine learning models on predicting DRL training algorithm families. We implement the classifiers based on various common models, i.e., RNN, MLP, SVM and random forest. Their prediction accuracy for each algorithm family as well as the average values are shown in Table \ref{tab:different_classifiers}. We observe the RNN classifier achieves the highest accuracy, followed by the MLP classifier. The performance of SVM and random forest is extremely low, since they are not good at handing sequential inputs. Therefore, RNN is the optimal choice for classification, and will be used for the rest experiments. 

\begin{table}[t]
    \centering
    \begin{tabular}{ >{\centering}m{0.08\textwidth} | >{\centering}m{0.06\textwidth}
    >{\centering}m{0.06\textwidth}
    >{\centering}m{0.06\textwidth}
    >{\centering\arraybackslash}m{0.12\textwidth}}
    \Xhline{1pt}
    Algorithm & RNN & MLP & SVM & Random Forest \\ 
    \Xhline{1pt}
    A2C    & 82\% & 62.4\% & 74.8\% & 34.8\% \\
    PPO    & 99\% & 79.4\% & 79.6\% & 77.8\% \\
    ACER   & 54\% & 41.8\% & 27.6\% & 28.6\% \\
    ACKTR  & 82\% & 67.6\% & 28.6\% & 44.8\% \\
    DQN    & 95\% & 47\%   & 21\%   & 37.6\% \\
    \Xhline{1pt}
    Average & 82.4\% & 60\% & 46.3\% & 44.6\% \\
    \Xhline{1pt} 
    \end{tabular}
    \caption{The accuracy of different classifiers}
    \label{tab:different_classifiers}
    \vspace{-15pt}
\end{table}

\noindent\textbf{Impact of hyperparameters for the algorithm classifier.}
The accuracy of the RNN classifier can be affected by a few hyperparameters (the length of the input sequence, the number of hidden layers). Figure \ref{fig:accuracy_impact} shows the accuracy under different combinations of these settings. First, we observe that the length of the input sequence can affect the classification performance: a longer input sequence can give a higher accuracy. Therefore, for Cart-Pole environment, we take all the actions within one episode as the input sequence ($T$=200). For Atari Pong environment, one episode can have up to 10,000 actions. It is not recommended to take the entire episode as input, which can incur very high cost and training over-fitting. Since $T=200$ can already give us very satisfactory accuracy, we will set the length of input sequence to 200 as well.

\setlength{\intextsep}{5pt}%
\setlength{\columnsep}{3pt}%
\begin{wrapfigure}{r}{0.6\columnwidth}
    \vspace{-10pt}
    \includegraphics[width=0.6\columnwidth]{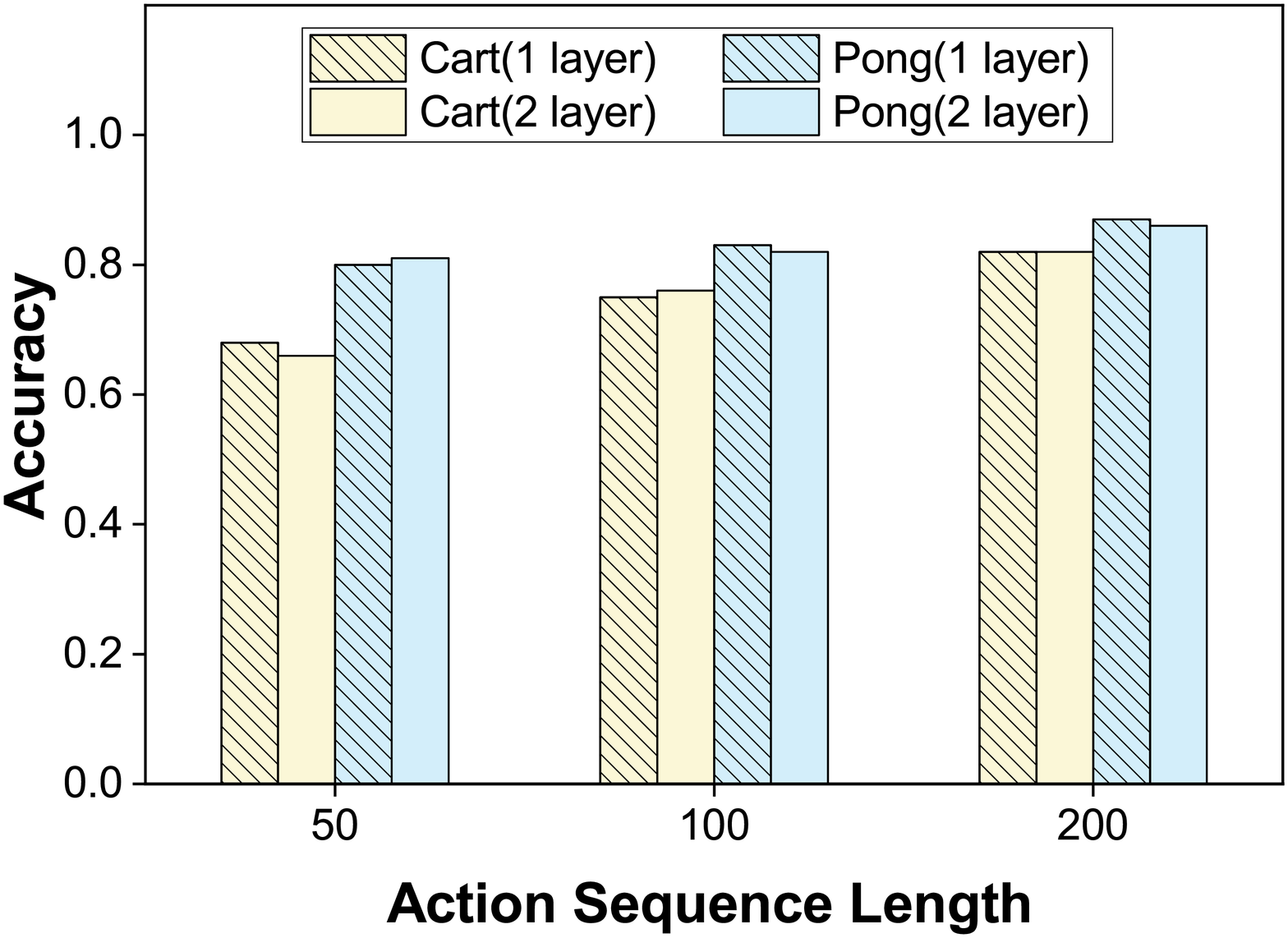}
    \vspace{-20pt}
    \caption{Average accuracy of RNN}
    \vspace{-5pt}
    \label{fig:accuracy_impact}
\end{wrapfigure}

Second, we consider different numbers of hidden LSTM layers (1 and 2) for the classifier. We observe that this factor has slight influence on the accuracy of the classifier. One hidden layer can already validate the effectiveness of the RNN classification. So in the following experiments, we will adopt a 1-layer RNN for simplicity. 

Third, the action space can also affect the classification accuracy. Higher-dimensional actions contain more information about the DRL model, and can be identified more accurately. In our case, the action space of Cart-Pole environment is 2  while that of Atari Pong environment is 6. Then the classification of Atari Pong has a higher accuracy than Cart-Pole, as reflected in Figure \ref{fig:accuracy_impact}.

\begin{figure}[t]
\centering
    \begin{minipage}{.49\columnwidth}
        \includegraphics[width=\textwidth]{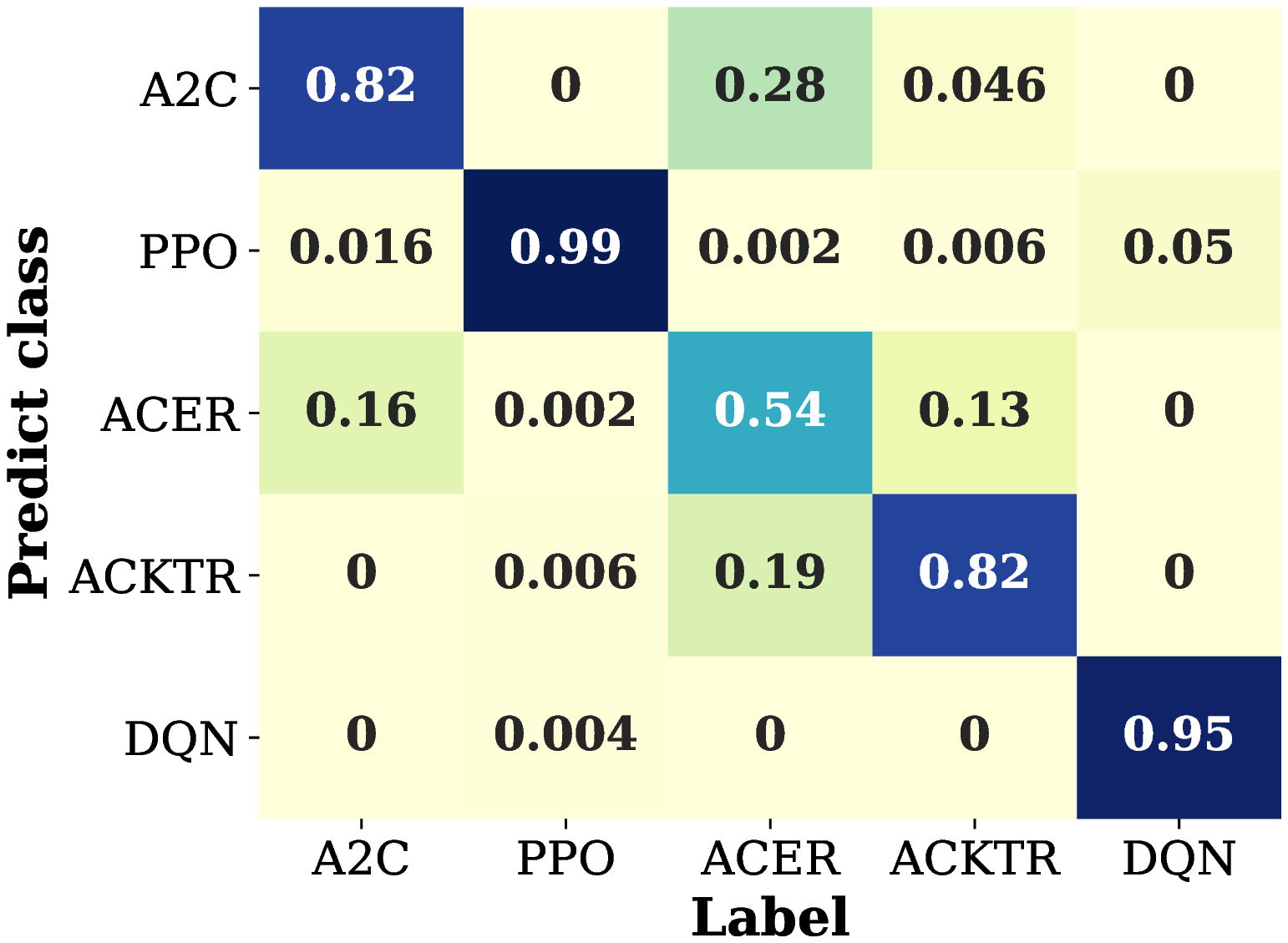}
        \subcaption{Cart-Pole}
        \label{subfig:accuracy_heatmap_cartpole}
    \end{minipage}
    \hfill
    \begin{minipage}{.49\columnwidth}
        \includegraphics[width=\textwidth]{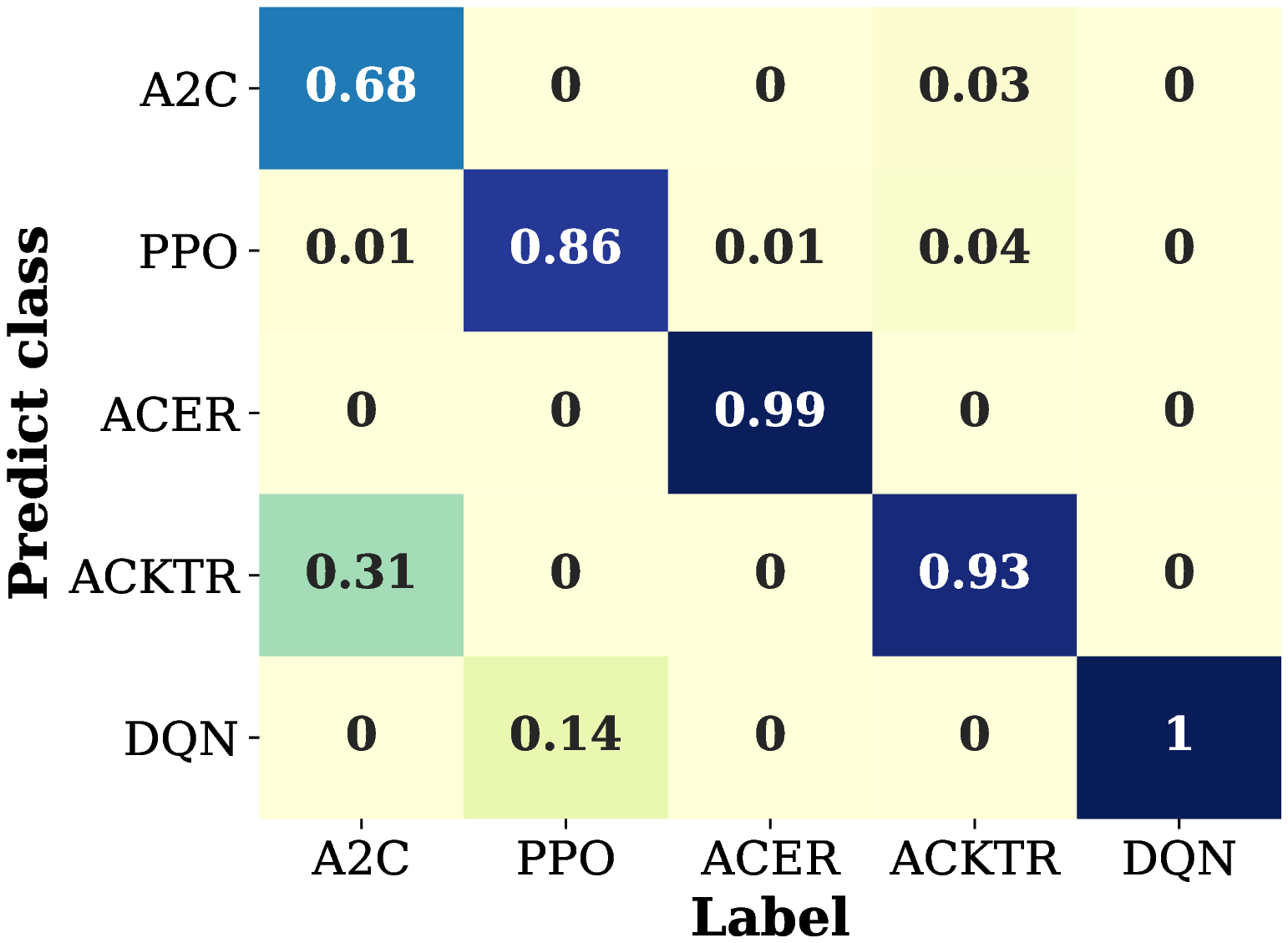}
        \subcaption{Atari Pong}
        \label{subfig:accuracy_heatmap_pong}
    \end{minipage}
    \caption{The accuracy of identified algorithm families}
    \label{fig:accuracy}
\end{figure}

\noindent{\textbf{Accuracy of each class.}}
Figures \ref{fig:accuracy} shows the confusion matrix for both environments. We observe the classifier can distinguish DRL models of each algorithm family with very high confidence. For most cases, the prediction accuracy is above 70\%; the best case is 100\% (DQN models in Atari Pong); the worst case is 54\% (ACER models in Cart-Pole), which is still much higher than random guess (20\%). The prediction accuracy of the DQN model is particularly high (95\% in Cart-Pole, and 100\% in Atari Pong). This is because DQN is a value-based algorithm while all the other algorithms are actor-critic methods. So DQN models are easier to be distinguished.

\begin{table}[t]
    \centering
    \begin{tabular}{>{\centering}m{0.075\textwidth}|
                >{\centering}m{0.06\textwidth}
                >{\centering}m{0.05\textwidth}
                >{\centering}m{0.05\textwidth}
                >{\centering}m{0.06\textwidth}
                >{\centering\arraybackslash}m{0.07\textwidth}}
    \Xhline{1pt}
    \multirow{2}{*}{Algorithm}
    & \multirow{2}{*}{Default}
    & \multicolumn{2}{c}{Learning Rate}
    & Train step 
    & Structure \\
    \cline{3-6} & & +50\% & -20\%  & +100\%  &  +1 Layer\\
    \Xhline{1pt}
    A2C    & 82\%    & 73\%   & 86\%     & 71\%   & 72\%  \\
    PPO    & 99\%    & 98\%   & 91\%     & 83\%   & 65\%  \\
    ACER   & 54\%    & 58\%   & 52\%     & 71\%   & 59\%  \\
    ACKTR  & 82\%    & 86\%   & 91\%     & 87\%   & 66\%  \\
    DQN    & 95\%    & 80\%   & 72\%     & 88\%   & 96\%  \\
    \Xhline{1pt}
    Average & 82.4\% & 79\%   & 78.4\%   & 80\%   & 71.6\%  \\
    \Xhline{1pt}
    \end{tabular}
    \caption{The impact of hyperparameters and network structures in the same family}
    \label{tab:hyperparameters}
    \vspace{-15pt}
\end{table}

\noindent{\textbf{Impact of hyperparameters and structures for DRL models.}}
When training the classifier, we use the default hyperparameters and model structures in the OpenAI Baselines benchmark \cite{baselines} to generate shadow DRL models and collect the sequence features. However, our classifier is still able to predict the algorithm family of a DRL model which uses a different set of hyperparameters and network structures, as the training algorithm dominates the network configurations in reflecting the model's characteristics. 

To confirm this, we train targeted DRL models with different hyperparameter values and model structures, and use our classifier to predict their algorithms. Specifically, considering the unstable learning process of large hyperparameters variance, we alter the learning rates (i.e., 150\% and 80\% of the default one) and training steps (double of the default one) properly in the Cart-Pole model training progress. 
The default network structures of all the algorithm families in our consideration consist of two hidden layers with 64 neurons, with an exception of DQN which has only one hidden layer. We modify the model structures by adding one more hidden layer to each network.
The prediction results are shown in Table \ref{tab:hyperparameters}. We observe that the prediction accuracy is still very high when the hyperparameters and structures are different from what are used during training. The results indicate that our classifier can learn the characteristics of different algorithm families instead of just memorizing certain samples.



\begin{figure}[t]
    \centering
    \includegraphics[width=0.5\textwidth]{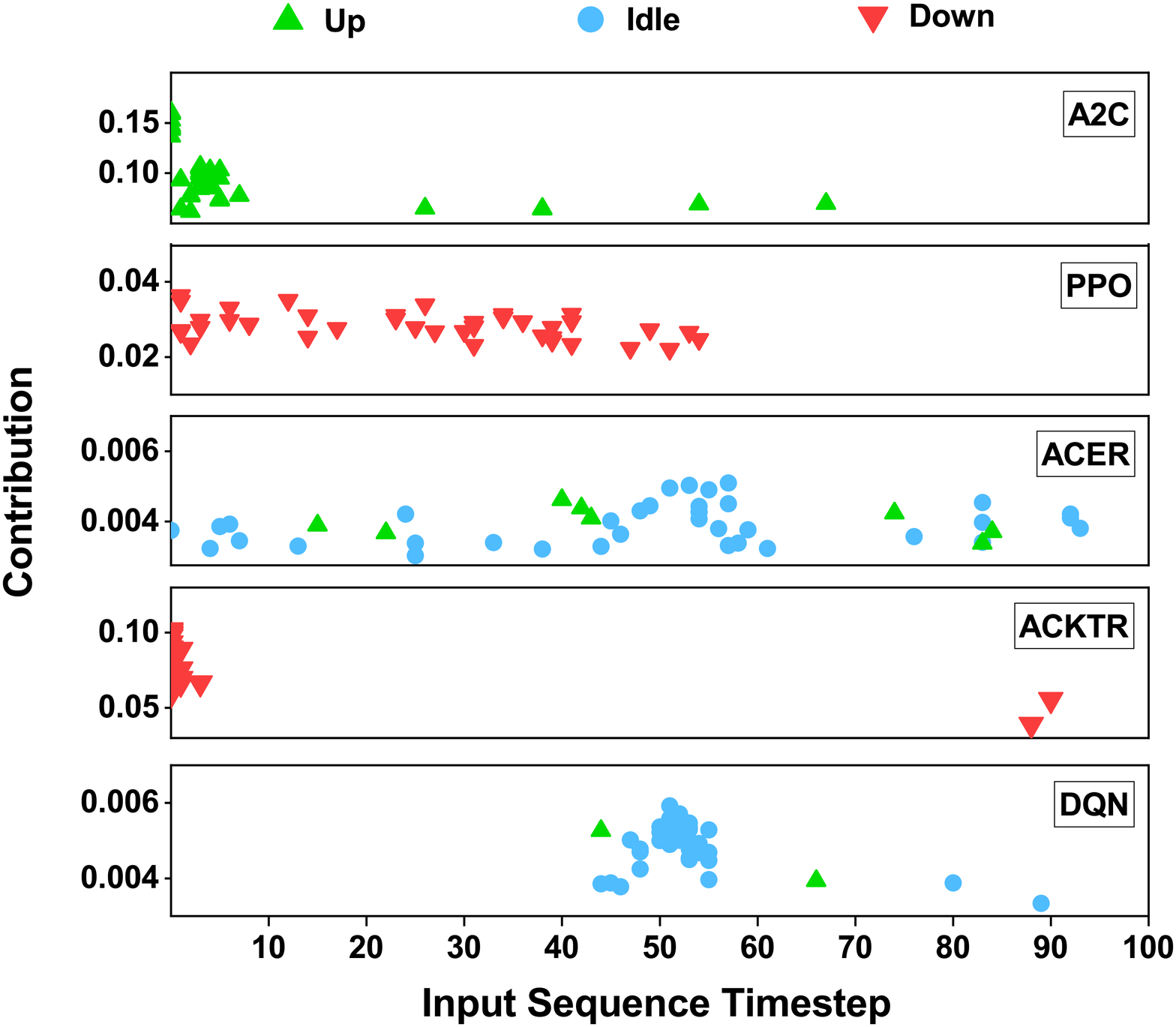}
    \caption{Action preference of different algorithm families}
    \label{fig:explain}
\end{figure}

\noindent\textbf{Interpretation of the classification results.}
We quantitatively explain why our classifier can distinguish different DRL algorithm families. We adopt the Local Interpretable Model-agnostic Explanations (LIME) framework \cite{lime}, which understands a model by perturbing its input and observing how the output changes. Specifically, it modifies a single data sample by tweaking the feature values and observes the resulting impact on the output to determine which features play an important role in the model predictions.

In our case, we build an explainer with LIME on our RNN classifier. Then, we randomly select 200 explanation instances from the training data of the classifier in Atari Pong environment. We feed these instances to the explainer and obtain the explanation results. For each explanation instance, we identify the feature (i.e., one action of the input sequence) which contributes most to the prediction. Through the analysis of these features, we can discover the different behaviors of DRL models trained from different algorithms.
Figure \ref{fig:explain} shows the contribution of the actions (\texttt{UP}, \texttt{DOWN}, \texttt{IDLE}) with prominent impacts on the prediction in each input sequence. We can observe that models trained with different DRL algorithm families give very different behavior preferences. A2C tends to issue important actions of \texttt{UP} at the beginning of the sequence; ACKTR prefers to give the action of \texttt{DOWN} also at the beginning of the task; DQN has a higher chance to predict \texttt{IDLE} clustering at the beginning; PPO issues the \texttt{DOWN} action all over the sequence with a large variance of contribution factor; ACER has important actions of \texttt{UP} and \texttt{IDLE} with similar contributions spanning all over the sequence. This shows those DRL algorithms have quite different characteristics in making action decisions, giving the classifier an opportunity to distinguish them just based on the actions.


\begin{figure}[t]
\centering
    \begin{minipage}{.49\columnwidth}
        \includegraphics[width=\textwidth]{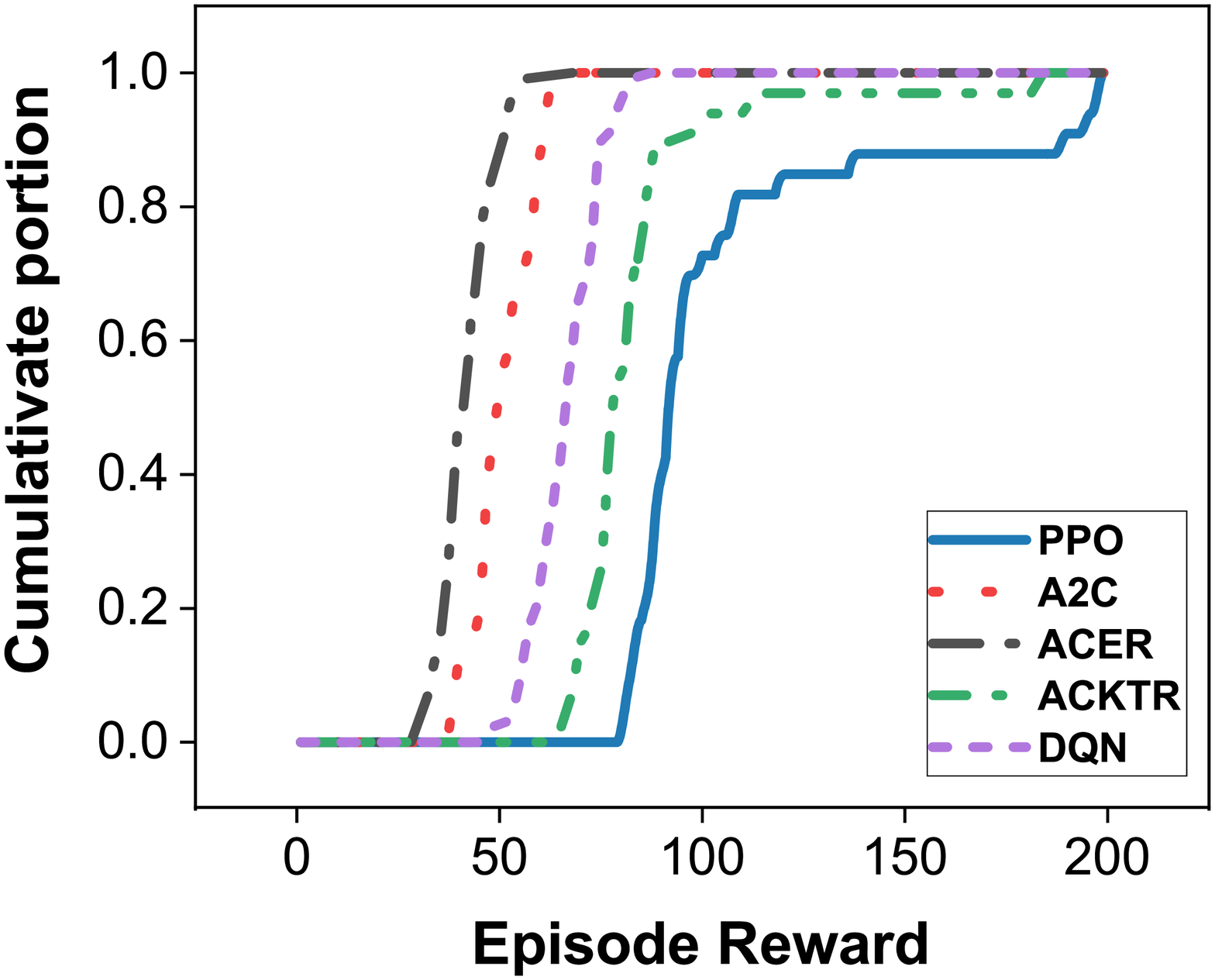}
        \subcaption{Performance accuracy (PPO)}
        \label{subfig:ppo_imitation_reward}
    \end{minipage}
    \begin{minipage}{.49\columnwidth}
        \includegraphics[width=\textwidth]{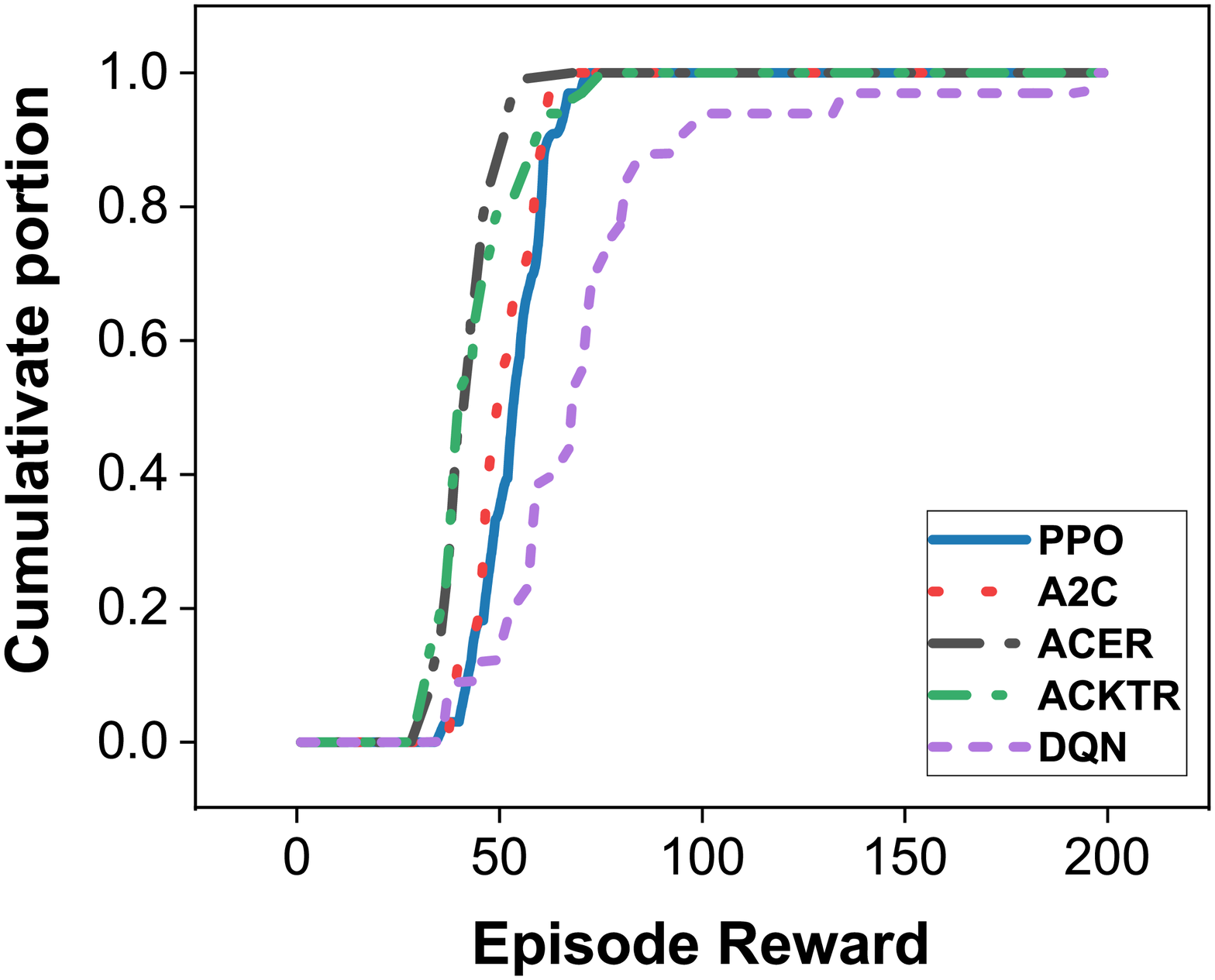}
        \subcaption{Performance accuracy (DQN)}
        \label{subfig:dqn_imitation_reward}
    \end{minipage}
    \caption{Results of accuracy extraction}
    \label{fig:accuracy-extract}
    \vspace{-9pt}
\end{figure}

\subsection{Results of Imitation Learning}
We demonstrate the effectiveness of imitation learning for model extraction. To train the replicated models, GAIL is applied to imitate the behaviors of the targeted model. 
We use two different types of DRL algorithms (i.e., value-based algorithm DQN and actor-critic algorithm PPO) as the generators of GAIL. Other policies can be applied in the same way. We follow \cite{baselines,lennon2019modeling} to implement the PPO and DQN generators of GAIL, respectively. Without loss of generality, we evaluate the imitation effects in Cart-Pole environment.


\noindent{\textbf{Accuracy extraction.}}
The extracted model with the same algorithm family can reach similar performance (i.e., reward) as the targeted model after imitation learning.  We assume the targeted model is in the PPO and DQN family, and the adversary has extracted such information via RNN classification. Then he adopts the GAIL framework to learn a new model from the same family. We repeat this process 30 times, and Figure \ref{fig:accuracy-extract} shows the cumulative probability of the episode reward of all extracted models (blue solid line in Figure \ref{subfig:ppo_imitation_reward} and purple dash line in Figure \ref{subfig:dqn_imitation_reward}). We can observe that the extracted model has a big chance to reach high episode rewards as the targeted model. In contrast, when the adversary uses a different training family from the targeted model for imitation learning, the extracted model can rarely reach high rewards. This indicates the importance of the algorithm identified by the RNN classification, in order to perform high-quality imitation learning. 


To further study the performance of the extracted models imitated with the same algorithm as the targeted model, we demonstrate the learning process (both models adopt PPO) in Figure \ref{subfig:imitation_ppo}. We observe that in the first imitation cycle, the extracted model cannot reach the same reward as the targeted one, as it learns the random behaviors of the targeted model with low rewards. Then we start a new imitation cycle, and now the learned model can get the same reward as the victim model (i.e., 100\%). We can stop with this replica, or continue to identify more qualified ones (at the 6th cycle). In contrast, we also consider a case where the adversary does not know the training algorithm, and randomly pick one for imitation learning. Figure \ref{subfig:imitation_acer} shows the corresponding imitation process (the targeted model uses ACER while the adversary selects the PPO generator). Now the extracted model can never get the same reward as the targeted model.

\begin{figure}[t]
\centering
    \begin{subfigure}{\columnwidth}
        \includegraphics[width=\textwidth]{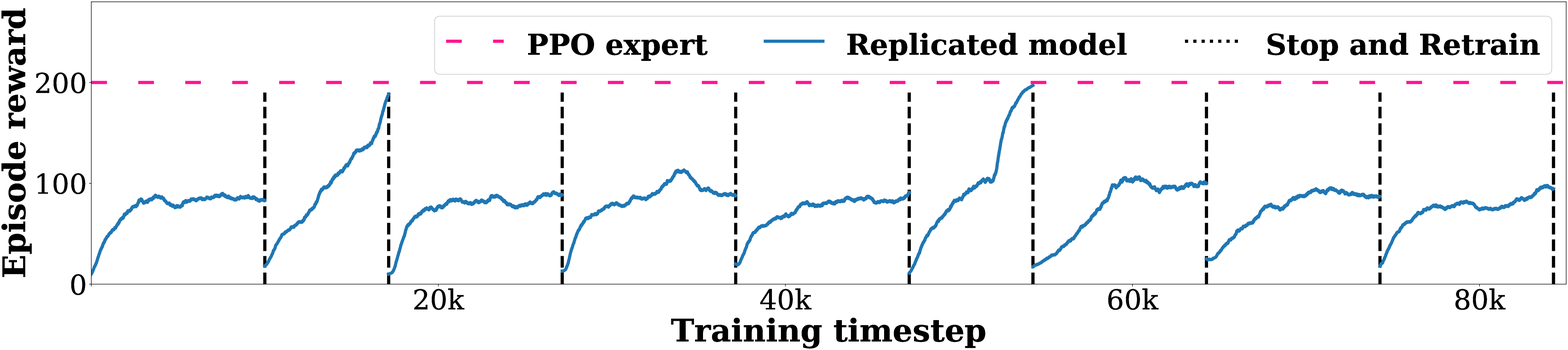}
        \caption{Imitation learning with the same algorithm}
        \label{subfig:imitation_ppo}
        \vspace{10pt}
    \end{subfigure}
    \begin{subfigure}{\columnwidth}
        \includegraphics[width=\textwidth]{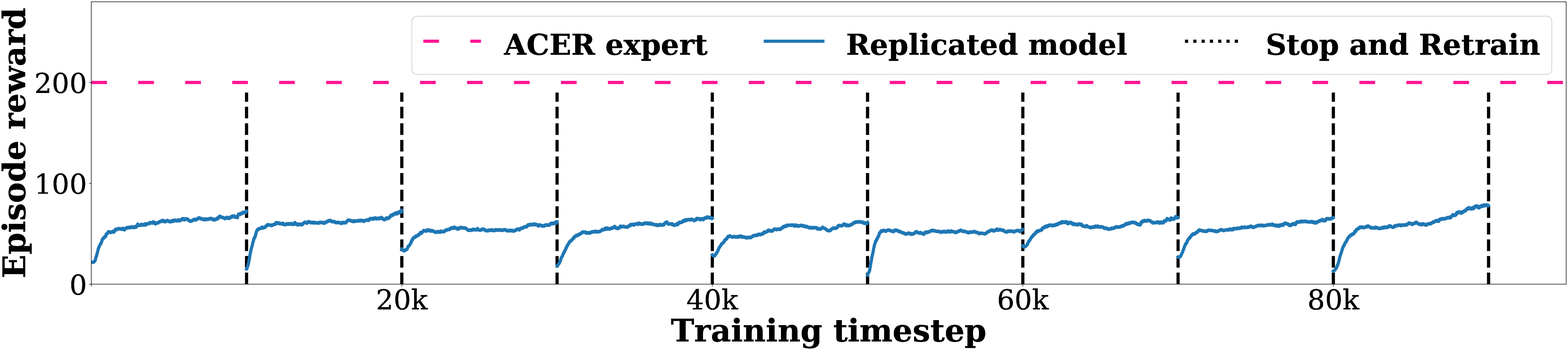}
        \caption{Imitation learning with different algorithms}
        \label{subfig:imitation_acer}
    \end{subfigure}

\caption{The reward curve of the extracted model during the imitation learning process}
\label{fig:imitation}
\end{figure}

\noindent{\textbf{Fidelity extraction.}}
In addition to the rewards, the extracted model can also learn similar behaviors as the targeted one. Since the output of a DRL model is a probability distribution over legal actions, we adopt the Jensen-Shannon (JS) divergence \cite{lin1991divergence} to measure the similarity of the action probability distributions between the imitated and the targeted models. We consider three cases: (1) the similarity between the targeted model and itself (i.e., collecting the behaviors twice). This serves as the baseline for comparison. (2) the similarity between the targeted model and the extracted model from imitation learning; (3) the similarity between the targeted model and a shadow model which is trained from scratch with the same algorithm rather than imitation learning. For each case, we feed the same states to the two models in comparison, sample 100 actions from each model, compute the action probability distributions and the divergence between these two action distributions. Figure \ref{fig:fidelity-extract} shows the cumulative probability of the JS divergence for each case. For the PPO algorithm, we can observe that the cumulative portion of JS divergence in both cases (1) and (2) increases sharply to 1 (the JS divergence on 97\% states is smaller than 0.05). Since the DQN algorithm is a deterministic method, which always outputs the same optimal action, all the behavior divergence in case (1) is zero and the JS divergence in case (2) also increases sharply to 1. This indicates that the extracted model indeed has very similar behaviors as the targeted model. 

\begin{figure}[t]
\centering
    \begin{minipage}{.49\columnwidth}
        \includegraphics[width=\textwidth]{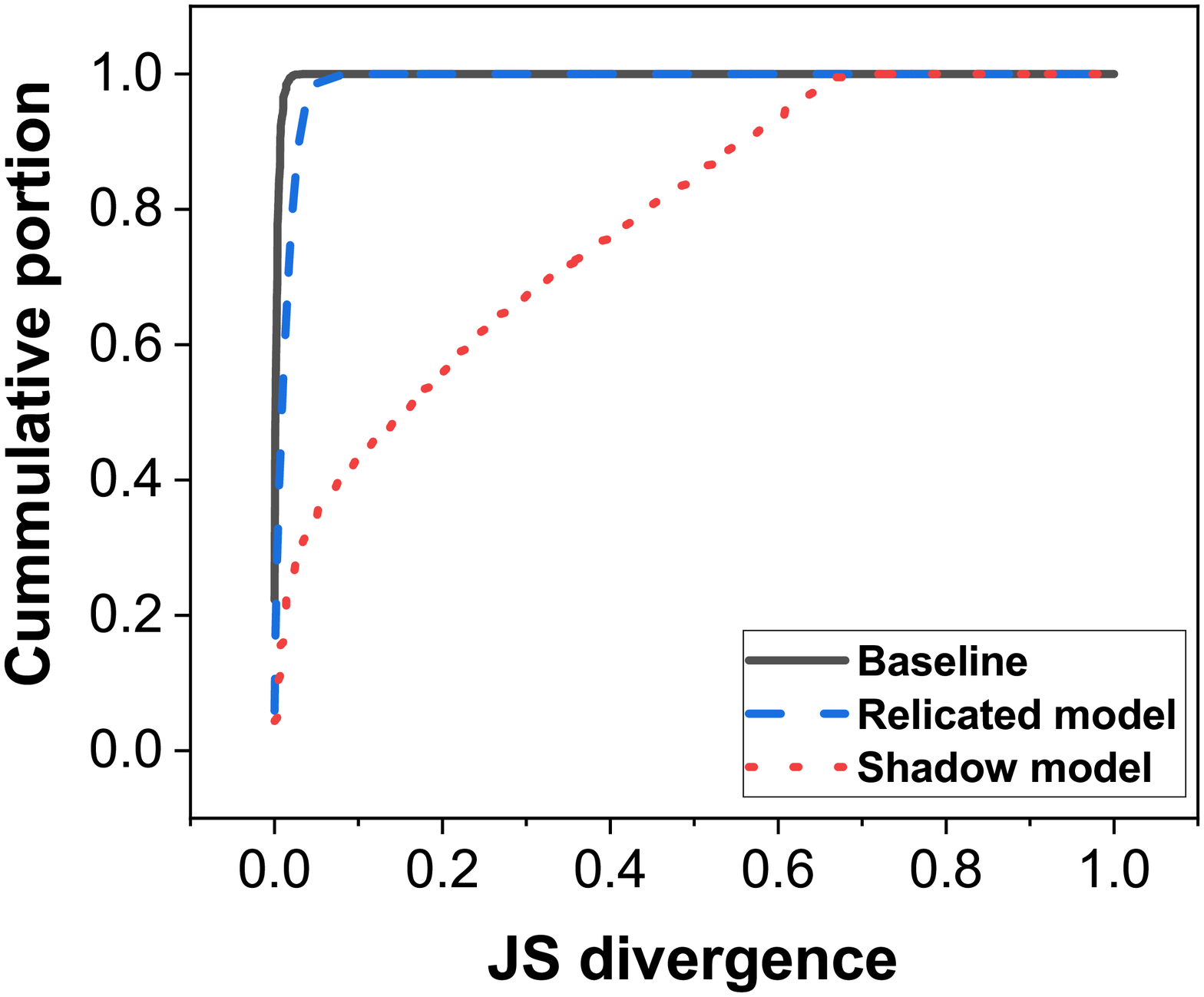}
        \subcaption{Behavior fidelity (PPO)}
        \label{subfig:ppo_model_similarity}
    \end{minipage}
    \begin{minipage}{.49\columnwidth}
        \includegraphics[width=\textwidth]{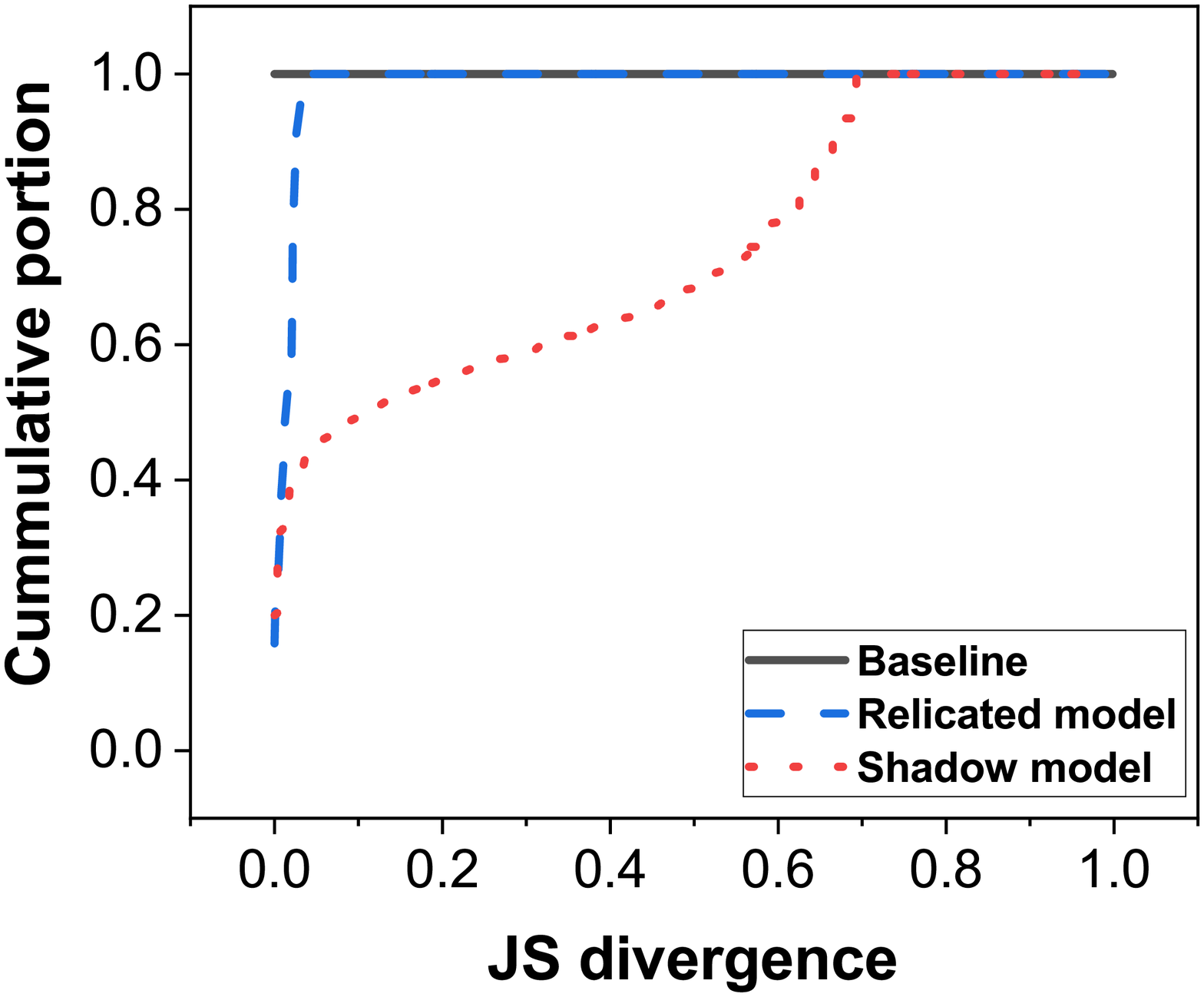}
        \subcaption{Behavior fidelity (DQN)}
        \label{subfig:dqn_model_similarity}
    \end{minipage}
    \caption{Results of fidelity extraction}
    \label{fig:fidelity-extract}
    \vspace{-10pt}
\end{figure}

In contrast, the divergence of action probability distributions between the shadow model and targeted model can be very high. Even they are trained from the same algorithm family, their behaviors are still quite distinct in the same environment. We can conclude that through imitation learning with the identified algorithm family, the extracted model can behave very closely with the targeted one.

\section{Case Study} \label{sec:adv-exp}
To demonstrate the effectiveness and severity of our attack methodology, we present two attack cases. The first case is to utilize model extraction to increase the transferability of adversarial examples against black-box DRL models. The second case is to steal a proprietary model and remove the watermarks embedded in it.



\subsection{Enhancing Adversarial Attacks}
As the first case, we show how we can leverage model extraction to facilitate existing adversarial attacks against DRL models. 

\noindent{\textbf{Motivation.}} 
One of the most severe security threats against machine learning models is the adversarial example \cite{szegedy2013intriguing}: with imperceptible and human unnoticeable modifications to the input, a machine learning model can be fooled to give wrong prediction results. Following this finding, many researchers have designed various methods to attack supervised DL models~\cite{fgsm,carlini2017towards,jsma}. 
Adversarial attacks against RL policies have also received attention over the past years. Huang et al. \cite{huang2017adversarial} made an initial attempt to attack neural network policies by applying the conventional method (FGSM) to the state at each time step. Enhanced adversarial attacks against DRL policies were further demonstrated for higher efficiency and success rates \cite{behzadan2017vulnerability,xiao2019characterizing}.

Adversarial attacks enjoy one important feature: \emph{transferability} \cite{szegedy2013intriguing,xie2019improving}. Malicious samples generated from one model has certain probability to fool other models as well. Due to this transferability, it becomes feasible for an adversary to attack black-box models, as he can generate adversarial examples from an alternative white-box model and then transfer them to the targeted model. 

The transferability of adversarial examples depends on the similarity of the white-box and black-box models. The adversarial examples have higher chance to fool the black-box model which shares similar features and behaviors as the white-box one. This becomes difficult in the scenario of DRL due to the models' complexity and large diversity, especially when they are trained from different algorithms. Hence, we can leverage the model extraction technique proposed in this paper to improve the transferability. Specifically, we identify the training algorithm from the black-box DRL model and replicate a new one. Then we generate adversarial examples via conventional methods from the parameters of the extracted model, and use them to attack the targeted black-box one. 


\noindent{\textbf{Implementation.}}
We evaluate the effectiveness of our improved adversarial attacks in Atari Pong. We conduct adversarial attacks on five commonly used DRL algorithm families (i.e., A2C, PPO, ACKTR, ACER, DQN). Each experiment is repeated 3 times and the average results are reported. For each targeted model, we also choose other different DRL algorithms to train shadow models as the baseline. For each white-box shadow model, we generate 1,000 adversarial examples by utilizing the FGSM technique \cite{fgsm}, 
\begin{equation}
    s^{adv} = s + \epsilon sign (\nabla_s L(s, a)),
\end{equation} 
where $\epsilon$ represents the perturbation magnitude set as 0.15 in our experiments. We attack the targeted models with the generated adversarial examples and collect their success rates.


\noindent{\textbf{Results.}}
Figure \ref{fig:transferability} reports the success rates (transferability) of adversarial examples across different algorithms under the same perturbation scale. We observe that the success rate increases when the extracted model is trained by the same training algorithm family as the targeted model. 

\begin{wrapfigure}{r}{0.6\columnwidth}
    \includegraphics[width=0.6\columnwidth]{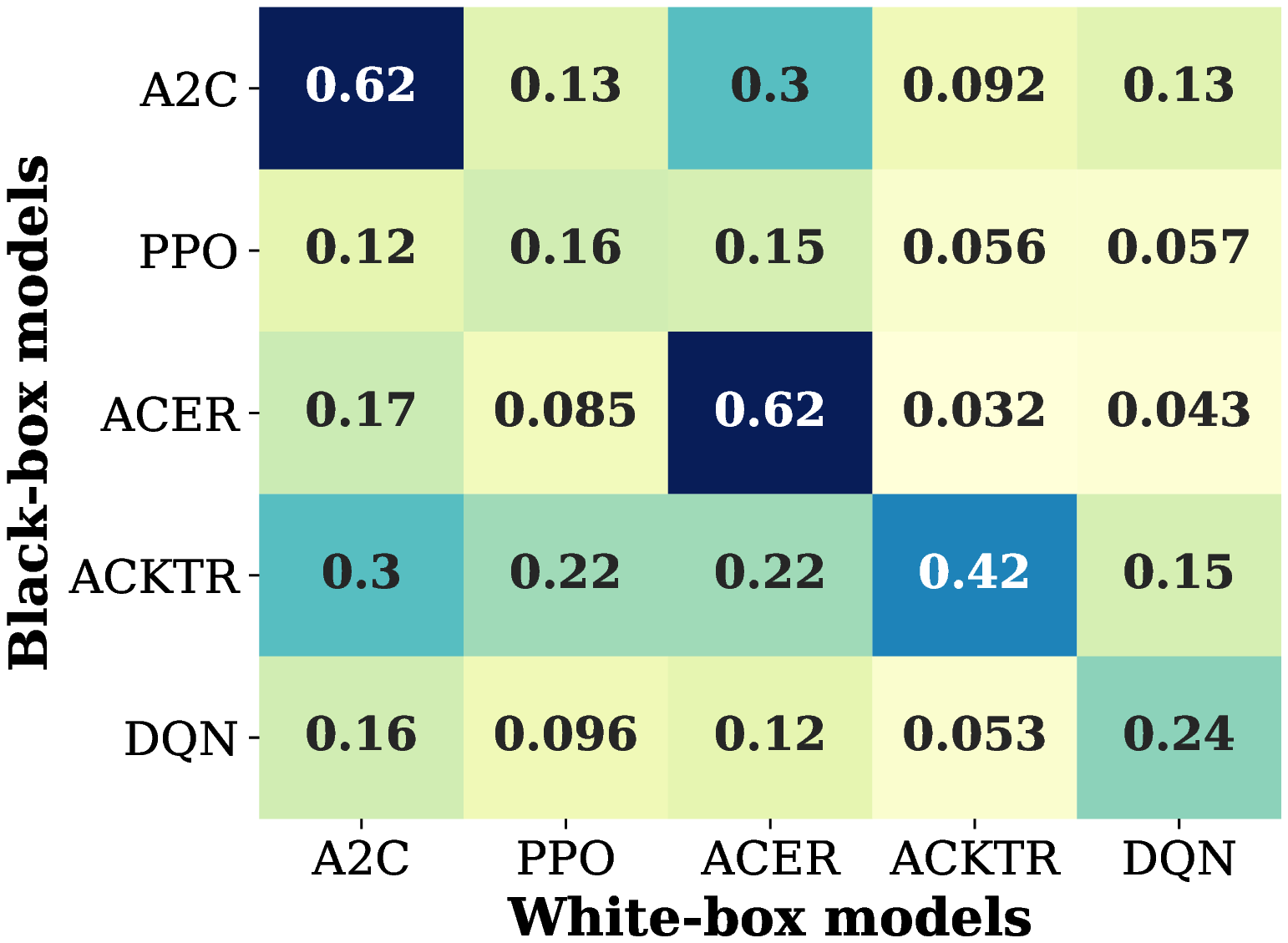}
    \vspace{-20pt}
    \caption{Enhancing transferability}
    \label{fig:transferability}
    \vspace{-5pt}
\end{wrapfigure}

The reason is that because the gradients of the DRL models with the same algorithm family are closer than the others, adversarial examples are easier to be transferred. This indicates that our model extraction technique can significantly enhance the black-box adversarial attacks. It is worth noting that since PPO is more robust against adversarial attacks \cite{huang2017adversarial}, the success rates on PPO is generally lower.

\subsection{Invalidating Watermark Protection}
\noindent{\textbf{Motivation.}} 
DL models have become important Intellectual Property (IP) for AI companies and practitioners. An efficient way for IP protection is watermarking. This technique enables the ownership verification of DL models by embedding watermarks into the models. Then the owners can extract the watermarks from suspicious models as the evidence of ownership. Various techniques have been designed to watermark DNN models \cite{uchida2017embedding,adi2018turning}. In the DRL scenario, the most common approach is to embed a sequential pattern of out-of-distribution states and actions into the targeted DRL model as watermarks \cite{behzadan2019sequential}. In this case study, we show that even a DRL model is protected by this watermark mechanism, our extraction technique is able to replicate the model and remove the embedded watermarks. In another word, our solution is able to learn the behaviors of the targeted model while ignoring the hidden watermark features. 


\noindent{\textbf{Implementation.}}
We consider three DRL algorithm families (A2C, PPO, DQN) with the Cart-Pole environment and train the corresponding watermarked DRL models following the implementation in \cite{behzadan2019sequential}.
Specifically, we first create five out-of-distribution watermark. At the $i$-th state, we define the corresponding action and the next state as $i\%2$ and state[$i\%4+1$], respectively. To embed the watermarks, we reset the reward of the predefined action on the $i$-th state to 1. If the DRL model acts differently from the watermark actions, we compulsorily terminate the episode and return a reward of -1. During the DRL training and watermark embedding process, we train DRL models in the new verification environment until the watermarks are successfully embedded and the watermarked models can achieve high reward values (i.e., 195). For each DRL algorithm family, we train three watermarked models with the above scheme and report the average results.

\begin{figure}[t]
    \centering
    \includegraphics[width=\columnwidth]{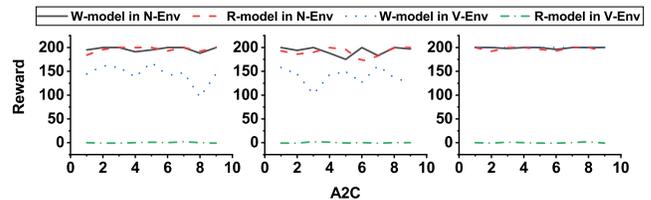}
    \caption{The performance of watermarked and extracted models in the normal and verification environments}
    \label{fig:watermark}
    \vspace{-5pt}
\end{figure}

\noindent{\textbf{Results.}}
Figure \ref{fig:watermark} shows the performance of the targeted watermarked model, as well as the extracted model using our proposed technique. The $x$-axis and $y$-axis represent the index of the experiment and the corresponding reward, respectively. In each figure, we show four lines: the watermarked model in the normal environment (W-model in N-Env) and verification environment (W-model in V-Env); the replicated model in the normal environment (R-model in N-Env) and verification environment (R-model in V-Env). 

We observe that, in the normal environment, the performance of the watermarked model and the replicated model is identical, indicating the success of model extraction. In contrast, in the verification environment, the replicated model has quite different behaviors and reward from the watermarked model. We conclude that the model extraction attack will not learn the watermarks from the targeted model, for all the three algorithms. We also observe that the watermarked model in the case of DQN outperforms other algorithms in the verification environment. This is because the DQN algorithm is deterministic and always outputs the optimal actions, while PPO and A2C are stochastic that sample actions following their action probabilities. Thus, if a watermarked model samples an action different from the predefined action, PPO and A2C will terminate the process and get a lower reward.

\section{Discussion}\label{sec:discussion}

\subsection{Attack Complexity}
The cost of our attack originates from algorithm classification and imitation learning. For the first step, the main effort is to train a number of DRL models for data collection, and train an algorithm classifier. Note that this offline step incurs only one-time cost. After the classifier is produced, we can use it to attack arbitrary DRL models for infinite times. This makes the training effort acceptable. Future works will focus on reducing the number of shadow models to maintain or improve the classification performance. The online prediction process is very fast with negligible overhead.

For the second step, the adversary needs to learn a model via imitation learning. This is comparable to training a new model from scratch. Sometimes the adversary may not obtain an ideal model with the same reward due to the high stochasticity in the DRL learning process. He has to repeat this step until a qualified model is identified. From our empirical results, the adversary can find the optimal model in a couple of rounds with very high probability if he uses the correct training algorithm. Besides, the adversary can perform fine-tuning over a well-trained model (e.g., a shadow model from the same training family at step 1) to speed up this imitation learning process.


\subsection{Extension to More Complex Scenarios}
Our attack method follows the general modeling of DRL scenarios to predict and extract the targeted model via the observed actions. In our experiments, we evaluate the proposed attack on two standard DRL benchmark tasks (i.e., Cart-Pole and Atari Pong). It is expected to also work on other more complex DRL applications such as robotics navigation \cite{zhu2017target} and motion control \cite{lillicrap2015continuous}. 

Our approach can be extended to other algorithms as well. We can train a classifier to recognize the features of new algorithms in a similar way. Particularly, some reinforcement learning algorithms may have states and actions that are not fully observable (e.g., partially observable Markov Decision Process \cite{cassandra1998survey}). In this case, the adversary can just select the observable state-action pairs while ignoring the hidden ones. Then he can follow our attack technique to perform model extraction. 

Another complex case is that the training algorithm of the targeted model may not fall into the candidate pool. For instance, the model can be an ensemble of multiple DRL and CNN models; the task is modeled as a multi-agent reinforcement learning problem. In this case, our classifier is not able to predict the algorithms correctly. To the best of our knowledge, extracting a combination of multiple models is an open problem and has not been solved yet. This will be an promising future direction in our consideration. 



\subsection{Potential Defenses}
As discussed in Section \ref{sec:past-defense}, existing defenses mainly focus on extraction of supervised DL models. It is hard to apply them to protect DRL models. Hence, a defense solution specifically designed for our attack is required. We propose three possible ideas. Validation and implementation of these solutions will be our future work. 

The first possible defense is to distinguish the query behaviors of imitation learning from normal operations. Imitation learning may exhibit specific query patterns, which can indicate the occurrence of model extraction. The second defense is to add higher stochasticity to the model's behaviors, making it harder for the adversary to recognize the algorithm or mimic exact behaviors. There can be a trade-off between the model usability and security that needs to be balanced. The third defense idea is to design new DRL training algorithms totally different from the ones in the family pool. If these algorithms are kept secret, the adversary is not able to perform accurate imitation learning.

\section{Related Work}\label{sec:work}



\subsection{DNN Model Extraction Attacks}
Model extraction attacks aim to steal a DNN model (i.e., acquiring the attributes or parameters of the model, or a good approximation) with only black-box access. These attacks can be classified into two categories. The first category is query-based attacks: the adversary treats the targeted model as an oracle, and queries it with carefully-crafted samples. The model is replicated based on the corresponding responses. Tramer et al. \cite{tramer2016stealing} realized this attack against modern machine learning services. Wang and Gong \cite{wang2018stealing} adopted similar techniques to steal the hyper-parameters in model training. Oh et al. \cite{oh2019towards} extracted the network structure and configurations of the targeted model (e.g., number of layers, optimization algorithm, and activation function). 

Advanced attacks were proposed to improve the efficiency of model extraction with fewer queries, like adversarial example-based query \cite{yu2020cloudleak}, learning-based query \cite{jagielski2020high}, gradients query \cite{milli2019model}. Model extraction attacks leveraging different types of query data were also demonstrated in \cite{orekondy2019knockoff,pal2019framework,correia2018copycat}. Chandrasekaran et al. \cite{chChGi:18} showed the process of model extraction is similar to active learning, which can further enhance the attack efficiency. Carlini et al. \cite{carlini2020cryptanalytic} treated the model extraction as a cryptanalytic problem, and applied the differential attack technique to solve this issue.


The second category is side-channel-based attacks. The adversary collects side-channel information emitted during the inference phase to infer the properties of the neural network models. This includes timing channels \cite{duddu2018stealing}, power side channels \cite{batina2019csi}, and micro-architectural side channels \cite{hong2018security,hua2018reverse}.

All of the above extraction works focus on conventional neural networks. In contrast, this paper presents the first attack against DRL models. Due to the distinct features and complex mechanisms of DRL models, it is hard to directly apply the prior techniques to address this problem. We propose a novel attack method integrating algorithm prediction and imitation learning to achieve DRL model extraction with high fidelity and efficiency.

\subsection{Defenses against Model Extraction}
\label{sec:past-defense}
Past works also introduced different defenses to thwart model extraction threats, which can be classified into three categories. The first type of solution is prediction modification. Inspired by the information theory, these solutions limit the information gained per query from the adversary to increase the difficulty of model extraction, including perturbing the output probability \cite{tramer2016stealing,chChGi:18}, removing the probabilities for some classes \cite{tramer2016stealing}, returning only the class output \cite{tramer2016stealing,chChGi:18}. The second type is to detect extraction adversaries from benign users based on query pattern analysis \cite{juuti2019prada,kesarwani2018model}. The third type of protection is to embed watermarks into the targeted models, and identify the extracted model copy by verifying the ownership \cite{zhang2018protecting,uchida2017embedding}.


Unfortunately, these solutions cannot defeat our DRL model extraction attacks. The adversary just uses normal states to query the targeted model, and observes the output actions instead of the confidence scores. As such, it is hard to prevent such malicious behaviors via prediction modification or query pattern analysis. We also show that our extracted model does not contain the watermarks anymore, thus invalidating the digital watermarking protection. 


\section{Conclusion}\label{sec:conclusion}

In this paper, we design a novel attack methodology to steal DRL models. We draw the insight that DRL model extraction can be analyzed as an imitation learning process. Hence, we propose to leverage state-of-the-art imitation learning techniques (e.g., Generative Adversarial Imitation Learning) to perform the attacks. To improve the extraction fidelity, we propose to build a classifier to predict the training algorithm family of the targeted model with only black-box access. The integration of algorithm prediction and imitation learning can achieve DRL model extraction with high accuracy and fidelity. Our attack technique can be used to enhance adversarial attacks and invalidate watermarking mechanisms. We expect this study can inspire people's awareness about the severity of DRL model privacy issue, and come up with better solutions to mitigate such model extraction attack.

\bibliographystyle{ACM-Reference-Format}
\bibliography{references}

\end{document}